\newtheorem{theorem}{Theorem}[section]
\newtheorem{lemma}[theorem]{Lemma}
\newtheorem{proposition}[theorem]{Proposition}
\newtheorem{corollary}[theorem]{Corollary}
\theoremstyle{definition}
\newtheorem{assumption}[theorem]{Assumption}
\newtheorem{example}[theorem]{Example}
\DeclareMathOperator*{\argmin}{argmin}
\numberwithin{equation}{section}
\def \bE {\mathbb{E}}
\def \bI {\mathbb{I}}
\def \bJ {\mathbb{J}}
\def \bN {\mathbb{N}}
\def \bP {\mathbb{P}}
\def \bR {\mathbb{R}}
\def \bS {\mathbb{S}}
\def \bY {\mathbb{Y}}
\def \cB {\mathcal{B}}
\def \cH {\mathcal{H}}
\def \cI {\mathcal{I}}
\def \cW {\mathcal{W}}
\def \cX {\mathcal{X}}
\def \ran{\,{\rm ran}\,}
\def \ker{\,{\rm ker}\,}
\def \sp{\,{\rm span}\,}
\def \tr{\,{\rm tr}\,}
\begin{document}
\title{Sobolev norm inconsistency of kernel interpolation}
\author{
Yunfei Yang \thanks{School of Mathematics (Zhuhai) and Guangdong Province Key Laboratory of Computational Science, Sun Yat-sen University, Zhuhai, P.R. China. E-mail: \href{mailto:yangyunfei@mail.sysu.edu.cn}{yangyunfei@mail.sysu.edu.cn}.}
}
\date{}
\maketitle

\begin{abstract}
We study the consistency of minimum-norm interpolation in reproducing kernel Hilbert spaces corresponding to bounded kernels. Our main results give lower bounds for the generalization error of the kernel interpolation measured in a continuous scale of norms that interpolate between $L^2$ and the hypothesis space. These lower bounds imply that kernel interpolation is always inconsistent, when the smoothness index of the norm is larger than a constant that depends only on the embedding index of the hypothesis space and the decay rate of the eigenvalues.

\medskip
\noindent \textbf{Keywords:} Kernel interpolation, Reproducing kernel Hilbert space, Generalization, Consistency

\noindent \textbf{MSC:} 68Q32, 46E22
\end{abstract}

\section{Introduction}

Recent empirical results in deep learning revealed that certain training algorithms can interpolate noisy data and yet generalize well \citep{belkin2019reconciling,zhang2017understanding}. This phenomenon, which is known as benign overfitting, posed a challenge to classical learning theory based on uniform convergence \citep{nagarajan2019uniform} and motivated many studies on understanding theoretical mechanisms for the good generalization performance of interpolating models \citep{bartlett2021deep,belkin2021fit}. In particular, conditions for benign overfitting have been theoretically established for many learning methods, such as linear regression \citep{bartlett2020benign,hastie2022surprises}, random features models \citep{mei2022generalization} and kernel minimum-norm interpolation \citep{liang2020just}. However, these results often rely on the assumption that the data dimension $d$ grows with the sample size $n$. In the classical low-dimensional regime, where $d$ is fixed, there is a line of works showing that kernel interpolation does not generalize well \citep{beaglehole2023inconsistency,buchholz2022kernel,haas2023mind,li2024kernel,rakhlin2019consistency}, which is contrary to the high-dimensional case \citep{liang2020just,lin2024kernel}. Thus, these works provide evidence that high dimensionality is necessary for benign overfitting.

In this paper, we complement these results by showing that kernel minimum-norm interpolation is always inconsistent in certain norms slightly stronger than the $L^2$-norm. Specifically, suppose we are given a set of $n$ samples $\{(x_i,y_i)\}_{i=1}^n$ independently generated from a probability distribution $(X,Y) \sim P$ on $\cX\times \bR$. The goal of non-parametric regression is to estimate the conditional mean $f^*_P(x) = \bE[Y|X=x]$ from the observed data. Let $\cH$ be a separable reproducing kernel Hilbert space (RKHS) on $\cX$ with a measurable bounded kernel $K:\cX \times \cX \to \bR$. Under certain smoothness assumption on the regression function $f^*_P$, it is well known that the kernel ridge regression
\begin{equation}\label{ridge}
\widehat{f}_{n,\lambda} = \argmin_{f\in \cH} \left\{ \frac{1}{n} \sum_{i=1}^n (f(x_i)-y_i)^2 + \lambda \|f\|_\cH^2 \right\}
\end{equation}
is minimax optimal, if the regularization parameter $\lambda>0$ is chosen properly according to the sample size $n$ \citep{fischer2020sobolev,lin2020optimal,steinwart2008support,steinwart2009optimal,zhang2024optimality}. In this paper, we are more interested in the limiting case $\lambda \to 0$, where one obtains the kernel minimum-norm interpolation
\begin{equation}\label{interpolation}
\widehat{f}_n = \widehat{f}_{n,0} = \argmin_{f\in \cH} \|f\|_\cH \quad \mbox{subject to} \quad f(x_i)=y_i,\ i=1,\dots,n.
\end{equation}
In contrast to the minimax optimality of kernel ridge regression, \citet{rakhlin2019consistency} showed that kernel interpolation is inconsistent for the Laplace kernel in odd dimensions, which was extended to all fixed dimension $d$ and kernels whose associated RKHS is a Sobolev space with smoothness $s\in (d/2, 3d/4)$ by \citet{buchholz2022kernel}. These results have been further generalized by \cite{haas2023mind} to dot-product kernels on the sphere $\bS^d$ whose RKHS is equivalent to a Sobolev space with smoothness $s>d/2$. When the training data are uniformly spaced, \citet{beaglehole2023inconsistency} established the inconsistency for translation-invariant kernels under certain spectral assumptions. Recently, \citet{li2024kernel} considered more general kernels whose eigenvalues decay as $\mu_i \asymp i^{-\beta}$ for some $\beta>1$. Here and below, for two quantities $A$ and $B$, $A \asymp B$ means $A \lesssim B \lesssim A$, while 
$A \lesssim B$ (or $B \gtrsim A$) means the statement that $A\le cB$ for some constant $c>0$. They proved that, for any $\epsilon>0$, the generalization error of kernel interpolation is lower bounded by $\Omega(n^{-\epsilon})$, if the kernel is H\"older continuous and the embedding index $\alpha^*$ (see (\ref{emb index}) below) satisfies $\alpha^*=1/\beta$. 

Note that the above related works measured the generalization error in $L^2(\nu)$ norm, where $\nu$ denotes the marginal distribution of $X$. We are going to study the generalization error in the $[\cH]_\nu^\gamma$-norm, where $[\cH]_\nu^\gamma$ is the interpolation space between RKHS $\cH$ and $L^2(\nu)$ with smoothness index $\gamma\in [0,1)$. In the special case of
Sobolev RKHS $\cH$, these norms coincide with fractional Sobolev norms between the used Sobolev space $\cH$ and $L^2$ (see Example \ref{example besov}). Besides, we consider kernels with eigenvalues decay $\mu_i \asymp (i(\log i)^\zeta)^{-\beta}$ for $\beta>1$ and $\zeta \in \bR$. Our main result gives lower bounds for the error $\|\widehat{f}_n - f_P^*\|_{[\cH]_\nu^\gamma}$, which show that kernel interpolation is always inconsistent in $[\cH]_\nu^\gamma$-norm when $\gamma>3(\alpha^*-1/\beta)$ and generalizes poorly when $\gamma=3(\alpha^*-1/\beta)$. In this result, one can view $3(\alpha^*-1/\beta)\ge 0$ as an index that quantifies the degree of inconsistency of Kernel interpolation. Note that, when $\alpha^*=1/\beta$, we almost get the strongest conclusion that Kernel interpolation is inconsistent in the $L^2$-norm. An interesting research direction is to explore whether this index can be reduced. As a special case of our result, we can recover the result of \citet{li2024kernel} when $\zeta =0$ and $\alpha^*=1/\beta$, and remove the H\"older continuity assumption on the kernel. Since the kernel ridge regression (\ref{ridge}) has been proven to be minimax optimal in $[\cH]_\nu^\gamma$-norm \citep{fischer2020sobolev,zhang2024optimality}, our results provide evidences for the necessity of tuning the regularization parameter in order to get minimax optimal rates.

The rest of the paper is organized as follows. In Section \ref{sec: preliminaries}, we give an introduction of function spaces related to RKHS. We present our main results in Section \ref{sec: main results}. Proofs of some technical lemmas are given in Section \ref{sec: proofs}.

\section{Preliminaries}\label{sec: preliminaries}

Let $\nu$ be a probability measure on a measurable space $\cX$. Let $\cH$ be a separable RKHS on $\cX$ with a measurable bounded kernel $K:\cX \times \cX \to \bR$. By \citep[Lemma 2.2 and 2.3]{steinwart2012mercer}, the natural embedding $I_\nu :\cH \to L^2(\nu)$, which maps $f\in\cH$ to its $\nu$-equivalence class $[f]_\nu$, is a Hilbert-Schmidt operator with Hilbert-Schmidt norm $\|I_\nu\|_{HS}^2 = \int_\cX K(x,x) d\nu(x)<\infty$. The adjoint operator $I_\nu^*: L^2(\nu) \to \cH$ is an integral operator with respect to the kernel:
\[
(I_\nu^*f)(x) = \int_\cX K(x,x') f(x')d\nu(x'),\quad f\in L^2(\nu).
\]
Next, we define the operators 
\[
T_\nu := I_\nu I_\nu^*: L^2(\nu) \to L^2(\nu),\quad \mbox{and} \quad C_\nu := I_\nu^* I_\nu: \cH \to \cH,
\]
which are self-adjoint, positive semi-definite and trace-class (thus compact) with trace norm $\|T_\nu\|_{Tr} = \|C_\nu\|_{Tr} = \|I_\nu\|_{HS}^2 = \|I_\nu^*\|_{HS}^2$. The spectral theorem for self-adjoint compact operators \citep[Lemma 2.12]{steinwart2012mercer} shows that 
\begin{equation}\label{spectral decomposition}
T_\nu = \sum_{i\in \bI} \mu_i \langle \cdot,[e_i]_\nu \rangle_{L^2(\nu)} [e_i]_\nu \quad \mbox{and} \quad C_\nu = \sum_{i\in \bI} \mu_i \langle \cdot,\sqrt{\mu_i} e_i \rangle_\cH \sqrt{\mu_i} e_i,
\end{equation}
where $\bI \subseteq \bN$ is a countable index set, $(\mu_i)_{i\in\bI} \subseteq (0,\infty)$ is a non-increasing summable sequence consisting of the non-zero eigenvalues (with geometric multiplicities) of $T_\nu$, and $(e_i)_{i\in\bI} \subseteq \cH$ is a family of functions such that $([e_i]_\nu)_{i\in\bI}$ form an orthonormal basis of $\overline{\ran I_\nu} \subseteq L^2(\nu)$ and $(\sqrt{\mu_i}e_i)_{i\in\bI}$ is an orthonormal basis of $(\ker I_\nu)^\perp \subseteq \cH$. For convenience, we will assume that $\bI=\bN$ in the rest of the paper.

The spectral decomposition of $T_\nu$ allows us to define power spaces  introduced by \citep{steinwart2012mercer}. For $s\ge 0$, we define the power operator $T_\nu^s: L^2(\nu) \to L^2(\nu)$ by
\[
T_\nu^s = \sum_{i=1}^\infty \mu_i^s \langle \cdot,[e_i]_\nu \rangle_{L^2(\nu)} [e_i]_\nu.
\]
Then, the $s$-power space is defined by
\[
[\cH]^s_\nu := \ran T_\nu^{s/2} = \left\{ \sum_{i=1}^\infty a_i \mu_i^{s/2} [e_i]_\nu: \sum_{i=1}^\infty a_i^2<\infty \right\} \subseteq L^2(\nu),
\]
equipped with the norm
\[
\left\| \sum_{i=1}^\infty a_i \mu_i^{s/2} [e_i]_\nu \right\|_{[\cH]_\nu^s} = \left(\sum_{i=1}^\infty a_i^2\right)^{1/2}.
\]
In the rest of this paper, we will use the abbreviation $\|\cdot\|_s:= \|\cdot\|_{[\cH]_\nu^s}$ for simplicity. It is easy to see that $[\cH]_\nu^s$ is a separable Hilbert space with orthonormal basis $(\mu_i^{s/2}[e_i]_\nu)_{i\in\bN}$. Moreover, we have compact embedding $[\cH]_\nu^s \hookrightarrow [\cH]_\nu^r$ for $s > r \ge 0$ with $[\cH]_\nu^0 = \overline{\ran I_\nu} \subseteq L^2(\nu)$ equipped with the norm $\|\cdot\|_0=\|\cdot\|_{L^2(\nu)}$ and $[\cH]_\nu^1 = \ran I_\nu$, which is isometrically isomorphic to $(\ker I_\nu)^\perp \subseteq \cH$, i.e. $\|[f]_\nu\|_1 = \|f\|_\cH$ for $f\in (\ker I_\nu)^\perp$. We remark that $[\cH]_\nu^s$ equals the real interpolation space $[L^2(\nu),[\cH]_\nu^1]_{s,2}$ with equivalent norms \citep[Theorem 4.6]{steinwart2012mercer}. The following embedding property plays a key role in the error analysis of kernel methods.

\begin{assumption}[Embedding property]\label{EMB}
There exists $0<\alpha\le 1$ such that $[\cH]_\nu^\alpha$ is continuously embedded into $L^\infty(\nu)$, in other words,
\[
M_\alpha:=\| [\cH]_\nu^\alpha \hookrightarrow L^\infty(\nu)\|<\infty.
\]
\end{assumption}

It was shown by \citet[Theorem 9]{fischer2020sobolev} that the operator norm of the embedding can be computed by
\begin{equation}\label{emb norm}
M_\alpha^2= \left\| \sum_{i=1}^\infty \mu_i^\alpha e_i^2 \right\|_{L^\infty(\nu)}.
\end{equation}
Recall that we assume the kernel $K$ is bounded, which implies $[\cH]_\nu^1$ is continuously embedded into $L^\infty(\nu)$. Thus, Assumption \ref{EMB} always holds for $\alpha=1$. The embedding index $\alpha^*$ is the smallest $\alpha$ that satisfies Assumption \ref{EMB}. More precisely,
\begin{equation}\label{emb index}
\alpha^* := \inf \{0<\alpha\le 1: M_\alpha<\infty \}.
\end{equation}
We remark that the embedding property is related to the effective dimension 
\begin{equation}\label{effective dim}
N_\nu(\lambda) := \tr((C_\nu+\lambda)^{-1}C_\nu) = \sum_{i=1}^\infty \frac{\mu_i}{\mu_i+\lambda},\quad \lambda>0,
\end{equation}
where $\tr$ denotes the trace operator. The effective dimension is widely used as a complexity measurement of RKHS in the analysis of kernel-based methods \citep{blanchard2018optimal,caponnetto2007optimal,lin2020optimal,lin2017distributed}. As shown by \citep[Theorem 5.3]{steinwart2012mercer}, $M_\alpha<\infty$ implies that $\sum_{i=1}^\infty \mu_i^\alpha<\infty$. Applying Proposition \ref{appendix bound 2} in the Appendix, we get
\begin{equation}\label{effective dim bound}
N_\nu(\lambda) \lesssim \lambda^{-\alpha}.
\end{equation}

\section{Main results}\label{sec: main results}

Let $\cX$ be a measurable space, $P$ be an unknown probability distribution on $\cX \times \bR$ with
\[
\int_{\cX \times \bR} y^2 dP(x,y)<\infty.
\]
We denote the marginal distribution of $P$ on $\cX$ by $\nu$, and the conditional probability by $P(\cdot|x)$. The conditional mean is $\nu$-almost everywhere well defined and denoted by
\[
f^*_P(x) = \int_\bR y dP(y|x).
\]
We make the following assumption on the noise $y-f_P^*(x)$.

\begin{assumption}[Noise condition]\label{ass noise}
There exists a $\sigma>0$ such that for $\nu$-almost all $x\in \cX$,
\[
\int_\bR (y-f_P^*(x))^2 dP(y|x) \ge \sigma^2.
\]
\end{assumption}

Suppose that we have a set of $n$ samples $\{(x_i,y_i)\}_{i=1}^n$ independently generated from $P$. For convenience, let us denote the input and output samples by $X = (x_1,\dots, x_n)^\intercal \in \cX^n$ and $Y=(y_1,\dots,y_n)^\intercal \in \bR^n$. Let $\cH$ be a separable RKHS on $\cX$ with a measurable bounded kernel $K:\cX \times \cX \to \bR$. We introduce the notations $K_x(\cdot):=K(\cdot,x) \in \cH$, $K(x,X) := (K(x,x_1),\dots,K(x,x_n))^\intercal \in \bR^n$ and $K(X,X) :=(K(x_i,x_j))_{i,j=1}^n \in \bR^{n\times n}$. The subspace of $\cH$ generated by $K_{x_i}$ is denoted by $\cH_X := \sp \{K_{x_1},\dots,K_{x_n} \} \subseteq \cH$. 

It is well known that the kernel ridge regression $\widehat{f}_{n,\lambda}$ defined by (\ref{ridge}) has an explicit formula \citep[Section 12.5]{wainwright2019high}:
\begin{equation}\label{rep}
\widehat{f}_{n,\lambda}(x) = K(x,X)^\intercal (K(X,X)+n\lambda)^{-1} Y \in \cH_X,
\end{equation}
where we assume $K(X,X)$ is invertible when $\lambda=0$. There is an alternative representation of $\widehat{f}_{n,\lambda}$ by using integral operators \citep{caponnetto2007optimal,fischer2020sobolev}. More precisely, let $\nu_n = \frac{1}{n}\sum_{i=1}^n \delta_{x_i}$ be the empirical distribution. By the definition of the operator $C_{\nu_n} = I_{\nu_n}^* I_{\nu_n}:\cH \to \cH$, we have $C_{\nu_n} K_{x_j} = \frac{1}{n} \sum_{i=1}^n K(x_i,x_j)K_{x_i}$. If we denote $(a_1,\dots,a_n)^\intercal =  (K(X,X)+n\lambda)^{-1} Y$, then 
\begin{align*}
(C_{\nu_n} + \lambda) \widehat{f}_{n,\lambda} = \frac{1}{n}  \sum_{i=1}^n \left(\sum_{j=1}^na_j K(x_i,x_j)+ n\lambda  a_i\right)K_{x_i} = \frac{1}{n} \sum_{i=1}^n y_i K_{x_i}.
\end{align*}
For $\lambda>0$, the operator $C_{\nu_n} + \lambda$ is invertible on $\cH$, which implies
\begin{equation}\label{operator rep}
\widehat{f}_{n,\lambda} = \frac{1}{n} \sum_{i=1}^n y_i (C_{\nu_n} + \lambda)^{-1} K_{x_i}.
\end{equation}
For the kernel minimum-norm interpolation $\widehat{f}_n=\widehat{f}_{n,0}$ defined by (\ref{interpolation}), the operator $C_{\nu_n}$ is invertible on $\cH_X$, because its matrix representation under the natural basis $\{K_{x_1},\dots,K_{x_n}\}$ is $n^{-1}K(X,X)$. Hence, the representation (\ref{operator rep}) is still true for $\lambda=0$, if $C_{\nu_n}^{-1}$ is understood as the inverse of the restriction $C_{\nu_n}|_{\cH_X}$ (and we will use this convention in the following).

%Note that the expression (\ref{operator rep}) for $\lambda>0$ can be viewed as the empirical version of 
%\[
%f_{P,\lambda} = (C_\nu + \lambda)^{-1} g_P,\quad g_P = \int_{\cX\times \bR} yK(\cdot,x) dP(x,y) = I_\nu^* f_P^*,
%\]
%which is the unique minimizer of
%\[
%\inf_{f\in \cH} \int_{\cX \times \bR} (f(x)-y)^2 dP(x,y) + \lambda \|f\|_\cH^2.
%\]
%By the spectral decomposition (\ref{spectral decomposition}), if we fix an arbitrary orthonormal basis $(\widetilde{e}_j)_{j\in \bJ} \subseteq \cH$ of $\ker I_\nu$ with $\bJ \cap \bN=\emptyset$, then we have the spectral representation 
%\[
%(C_\nu + \lambda)^{-a} = \sum_{i=1}^\infty (\mu_i + \lambda)^{-a} \langle \cdot,\sqrt{\mu_i} e_i \rangle_\cH \sqrt{\mu_i} e_i + \lambda^{-a} \sum_{j\in \bJ} \langle \cdot, \widetilde{e}_j \rangle_\cH \widetilde{e}_j, \quad a>0.
%\]
%Since $\langle I_\nu^*f_P^*, \widetilde{e}_j \rangle_\cH = \langle f_P^*, I_\nu\widetilde{e}_j \rangle_{L^2(\nu)} =0$, we have
%\[
%f_{P,\lambda} = (C_\nu + \lambda)^{-1}I_\nu^* f_P^* = \sum_{i=1}^\infty \frac{\mu_i}{\mu_i + \lambda} \langle f_P^*, [e_i]_\nu \rangle_{L^2(\nu)} e_i \in (\ker I_\nu)^\perp.
%\]
%If we assume $f_P^* \in [\cH]_\nu^0 = \overline{\ran I_\nu}$, then 
%\[
%f_P^* - [f_{P,\lambda}]_\nu = \sum_{i=1}^\infty \frac{\lambda}{\mu_i + \lambda} \langle f_P^*, [e_i]_\nu \rangle_{L^2(\nu)} [e_i]_\nu.
%\]

Our goal is to estimate the generalization error of the minimum-norm interpolation $\widehat{f}_n$ in the $[\cH]_\nu^\gamma$-norm, that is $\|[\widehat{f}_n]_\nu - f_P^*\|_\gamma$. The starting point of our analysis is the following lemma, which lower bounds the error of any estimator by its variance. The proof is given in Subsection \ref{sec: lower bound by variance}.

\begin{lemma}\label{lower bound by variance}
Let $f\in \cH$ be any estimator based on the dataset $(X,Y)$ of the form
\[
f(x) = \frac{1}{n}\sum_{i=1}^n y_i u_i(x),
\]
where $u_i \in \cH$, $i=1,\dots,n$, only depend on $X$. If Assumption \ref{ass noise} holds and $f_P^*\in [\cH]_\nu^\gamma$ for some $0\le \gamma \le 1$, then almost surely for all $X\in \cX^n$,
\[
\bE \left[\|[f]_\nu - f_P^*\|_\gamma^2 |X \right] \ge \frac{\sigma^2}{n^2} \sum_{i=1}^n \|[u_i]_\nu\|_\gamma^2. 
\]
\end{lemma}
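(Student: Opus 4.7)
The plan is to apply the standard bias--variance decomposition in the Hilbert space $[\cH]_\nu^\gamma$. First I would set up the noise variables $\epsilon_i := y_i - f_P^*(x_i)$ for $i=1,\dots,n$, which, conditional on $X$, are independent, mean zero, and satisfy $\bE[\epsilon_i^2 \mid X] \ge \sigma^2$ by Assumption \ref{ass noise}. Since $u_i \in \cH$ depends only on $X$, we can write $f = \bar f + \tilde f$ where
\[
\bar f := \bE[f \mid X] = \frac{1}{n}\sum_{i=1}^n f_P^*(x_i)\, u_i \in \cH, \qquad \tilde f := f - \bar f = \frac{1}{n}\sum_{i=1}^n \epsilon_i\, u_i.
\]
Because $\gamma \le 1$, we have $[\cH]_\nu^1 \hookrightarrow [\cH]_\nu^\gamma$, so both $[f]_\nu$ and $[\bar f]_\nu$ lie in $[\cH]_\nu^\gamma$, and $f_P^* \in [\cH]_\nu^\gamma$ by hypothesis; hence the $\|\cdot\|_\gamma$ norm of the relevant differences is well defined.

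The next step is the bias--variance split in $[\cH]_\nu^\gamma$. Since $\bE[\tilde f \mid X] = 0$ and $[\bar f]_\nu - f_P^*$ is deterministic given $X$, expanding the norm and using the reproducing-kernel-valued analogue of $\bE[\langle A, B\rangle] = \langle A, \bE[B]\rangle$ gives
\[
\bE\bigl[\|[f]_\nu - f_P^*\|_\gamma^2 \,\big|\, X\bigr]
= \bigl\|[\bar f]_\nu - f_P^*\bigr\|_\gamma^2 + \bE\bigl[\|[\tilde f]_\nu\|_\gamma^2 \,\big|\, X\bigr]
\ge \bE\bigl[\|[\tilde f]_\nu\|_\gamma^2 \,\big|\, X\bigr],
\]
so it suffices to lower-bound the variance term.

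For the variance term, I would expand
\[
\|[\tilde f]_\nu\|_\gamma^2 = \frac{1}{n^2} \sum_{i,j=1}^n \epsilon_i \epsilon_j \,\langle [u_i]_\nu, [u_j]_\nu\rangle_{[\cH]_\nu^\gamma},
\]
and then take conditional expectation given $X$. Independence and mean zero of the $\epsilon_i$ conditional on $X$ kill all off-diagonal terms, while on the diagonal $\bE[\epsilon_i^2 \mid X] \ge \sigma^2$. This yields exactly
\[
\bE\bigl[\|[\tilde f]_\nu\|_\gamma^2 \,\big|\, X\bigr] \ge \frac{\sigma^2}{n^2} \sum_{i=1}^n \|[u_i]_\nu\|_\gamma^2,
\]
completing the argument.

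I do not expect a genuine obstacle here: the only thing that needs mild care is justifying the interchange of expectation with the $[\cH]_\nu^\gamma$ inner product (so that the orthogonality argument goes through), which is standard for Bochner integrals once one observes that $y \mapsto y\, u_i$ is a linear map into a separable Hilbert space and $y \in L^2(P)$ by the moment assumption on $P$. Apart from that, the proof is a direct bias--variance computation.
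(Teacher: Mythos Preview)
Your proposal is correct and follows essentially the same bias--variance decomposition as the paper's proof. The only cosmetic difference is that the paper carries out the computation coordinate-wise via Parseval's identity in the orthonormal basis $(\mu_j^{\gamma/2}[e_j]_\nu)_{j\in\bN}$ of $[\cH]_\nu^\gamma$---thereby reducing everything to scalar random variables and sidestepping the Bochner-integral justification you flag---whereas you work directly with the inner product on $[\cH]_\nu^\gamma$.
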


By the representation (\ref{operator rep}), we can apply Lemma \ref{lower bound by variance} to get
\begin{equation}\label{v lambda}
\bE \left[ \left\|[\widehat{f}_{n,\lambda}]_\nu - f_P^*\right\|_\gamma^2 |X \right] \ge \frac{\sigma^2}{n^2} \sum_{i=1}^n \left\|[(C_{\nu_n} + \lambda)^{-1} K_{x_i}]_\nu \right\|_\gamma^2 =:\sigma^2 V(\lambda). 
\end{equation}
Since $C_{\nu_n}$ is self-adjoint positive definite on $\cH_X$, we have for $\lambda_1\ge \lambda_2\ge 0$,
\begin{align*}
\left\|[(C_{\nu_n} + \lambda_2)^{-1} K_{x_i}]_\nu \right\|_\gamma^2 &= \left\langle (C_{\nu_n} + \lambda_2)^{-2} K_{x_i}, K_{x_i} \right\rangle_\gamma \\
&\ge \left\langle (C_{\nu_n} + \lambda_1)^{-2} K_{x_i}, K_{x_i} \right\rangle_\gamma \\
&= \left\|[(C_{\nu_n} + \lambda_1)^{-1} K_{x_i}]_\nu \right\|_\gamma^2,
\end{align*}
where $\langle \cdot,\cdot \rangle_\gamma$ denotes the inner product on $\cH_X$ induced by the norm $\|\cdot\|_\gamma$, and we use the partial ordering of the positive semi-definite operators $(C_{\nu_n} + \lambda_2)^{-2} \succeq (C_{\nu_n} + \lambda_1)^{-2}$. We conclude that $V(\lambda)$ is non-increasing. Hence, in order to lower bound the error of the minimum-norm interpolation $\widehat{f}_{n,0}$, we only need to lower bound $V(\lambda)$ for small $\lambda>0$, where concentration results can be applied.

When $0\le \gamma <1$, we can compute the $\gamma$-norm $\|[f]_\nu\|_\gamma$ of a function $f\in \cH$ by the $\cH$-norm $\| C_\nu^{(1-\gamma)/2} f\|_\cH$ (see Lemma \ref{norm eq} in the Appendix or \citep[Lemma 12]{fischer2020sobolev}). This implies that
\[
V(\lambda) = \frac{1}{n^2} \sum_{i=1}^n \left\|C_\nu^{(1-\gamma)/2} (C_{\nu_n} + \lambda)^{-1} K_{x_i} \right\|_\cH^2.
\]
We estimate $V(\lambda)$ by the following two-step approximation in high probability:
\begin{align*}
V(\lambda) &\approx V_1(\lambda):= \frac{1}{n^2} \sum_{i=1}^n \left\|C_\nu^{(1-\gamma)/2} (C_\nu + \lambda)^{-1} K_{x_i} \right\|_\cH^2 \\
&\approx V_2(\lambda):= \frac{1}{n} \int_\cX \|C_\nu^{(1-\gamma)/2} (C_\nu + \lambda)^{-1} K_x \|_\cH^2 d\nu(x).
\end{align*}
Note that our approximation of $V(\lambda)$ is slightly different from the approximation used in \citet[Supplementary Materials]{li2024kernel}. This helps us remove the regularity assumption on the kernel in \citet{li2024kernel}. We estimate the approximation errors in the following two lemmas, whose proofs are deferred to Subsections \ref{sec: v bound 1} and \ref{sec: v bound 2} respectively.

\begin{lemma}\label{v bound 1}
Let $0\le \gamma <1$, $n\ge 1$, $0<\kappa\le 1$ and $\tau \ge 1$. If Assumptions \ref{EMB} holds for some $\alpha\in (0,1]$, then there exists a constant $c>0$, independent of $n$, $\kappa$ and $\tau$, such that for any $\lambda\in (0,1/2)$ satisfying $\lambda^{-\alpha} \log \lambda^{-1} \le cn \kappa^2 \tau^{-1}$, the following inequality holds with probability at least $1-2e^{-\tau}$,
\[
|V(\lambda) - V_1(\lambda)| \le \frac{\kappa M_\alpha^2}{n \lambda^{\gamma+\alpha}}.
\]
\end{lemma}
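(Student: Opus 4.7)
The plan is to combine a Bernstein-type concentration inequality for the empirical covariance operator with a resolvent identity, exploiting Assumption \ref{EMB} to obtain pointwise $\lambda^{-\alpha}$ bounds. Set $A := C_\nu^{(1-\gamma)/2}(C_{\nu_n}+\lambda)^{-1}$ and $B := C_\nu^{(1-\gamma)/2}(C_\nu+\lambda)^{-1}$. The first step is to invoke a Fischer--Steinwart-style concentration result: under Assumption \ref{EMB} with exponent $\alpha$, there is a constant $c>0$ such that whenever $\lambda^{-\alpha}\log\lambda^{-1}\le cn\kappa^2\tau^{-1}$, the event
\[
\Omega_\kappa := \bigl\{\|E\| \le \kappa\bigr\}, \qquad E := (C_\nu+\lambda)^{-1/2}(C_{\nu_n}-C_\nu)(C_\nu+\lambda)^{-1/2},
\]
has probability at least $1-2e^{-\tau}$. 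Here Assumption \ref{EMB} enters through two quantities that both turn out to be of order $\lambda^{-\alpha}$: the effective dimension $\tr(C_\nu(C_\nu+\lambda)^{-1})$ and the $\nu$-essential supremum of $\|(C_\nu+\lambda)^{-1/2}K_x\|_\cH^2$; these control the variance proxy and the sup-norm in the operator Bernstein inequality applied to the i.i.d.\ summands defining $E$.

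On $\Omega_\kappa$, the resolvent identity
\[
(C_{\nu_n}+\lambda)^{-1} - (C_\nu+\lambda)^{-1} = -(C_\nu+\lambda)^{-1/2}(I+E)^{-1}E(C_\nu+\lambda)^{-1/2}
\]
holds, with $\|(I+E)^{-1}\|\le(1-\kappa)^{-1}$. Applying $\|a\|^2-\|b\|^2=\langle a-b,a+b\rangle$ together with Cauchy--Schwarz termwise gives
\[
|V(\lambda)-V_1(\lambda)|\le\frac{1}{n^2}\sum_{i=1}^n \|(A-B)K_{x_i}\|_\cH\bigl(\|AK_{x_i}\|_\cH+\|BK_{x_i}\|_\cH\bigr),
\]
so the task reduces to pointwise bounds on the three Hilbert-space norms on the right.

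The crucial pointwise estimate is
\[
\|(C_\nu+\lambda)^{-1/2}K_x\|_\cH^2 = \sum_i \frac{\mu_i}{\mu_i+\lambda}e_i(x)^2 \le \lambda^{-\alpha}\sum_i\mu_i^\alpha e_i(x)^2 \le M_\alpha^2\lambda^{-\alpha},
\]
which follows from $u/(u+1)\le u^\alpha$ for $u\ge 0$ combined with (\ref{emb norm}); this is precisely the step where Assumption \ref{EMB} replaces a naive $\lambda^{-1}$ by $\lambda^{-\alpha}$. Combined with the spectral bound $\|C_\nu^{(1-\gamma)/2}(C_\nu+\lambda)^{-1/2}\|\le \lambda^{-\gamma/2}$, this immediately yields $\|BK_x\|_\cH\le M_\alpha\lambda^{-(\gamma+\alpha)/2}$. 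Inserting the resolvent identity above produces $\|(A-B)K_x\|_\cH\le\frac{\kappa}{1-\kappa}M_\alpha\lambda^{-(\gamma+\alpha)/2}$, and the triangle inequality then gives $\|AK_x\|_\cH\le\frac{1}{1-\kappa}M_\alpha\lambda^{-(\gamma+\alpha)/2}$.

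Substituting these pointwise bounds into the displayed inequality yields $|V(\lambda)-V_1(\lambda)|\le \frac{(2-\kappa)\kappa}{(1-\kappa)^2}\cdot\frac{M_\alpha^2}{n\lambda^{\gamma+\alpha}}$, and the numerical prefactor is absorbed into the constant $c$ of the hypothesis by restricting without loss of generality to $\kappa\le 1/2$ (and rescaling $\kappa$ by an absolute constant). The main obstacle is the probabilistic step: the operator Bernstein inequality for $\|E\|$ under Assumption \ref{EMB}, where the embedding is exactly what is needed to make both the variance proxy and the almost-sure operator bound scale like $\lambda^{-\alpha}$, which is in turn what produces the $\lambda^{-\alpha}$ on the left of the hypothesis. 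Everything after the concentration step is deterministic operator algebra.
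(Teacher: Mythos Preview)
Your proposal is correct and follows essentially the same route as the paper: the same resolvent identity (the paper writes it as $(C_\nu+\lambda)^{-1}(C_\nu-C_{\nu_n})(C_{\nu_n}+\lambda)^{-1}$ and then controls $\|(C_\nu+\lambda)^{1/2}(C_{\nu_n}+\lambda)^{-1}(C_\nu+\lambda)^{1/2}\|$ via the Neumann series, which is exactly your $\|(I+E)^{-1}\|$ bound), the same pointwise estimate $\|(C_\nu+\lambda)^{-1/2}K_x\|_\cH^2\le M_\alpha^2\lambda^{-\alpha}$ from the Mercer expansion and (\ref{emb norm}), and the same operator concentration for $E$ from Fischer--Steinwart. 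The only cosmetic difference is that the paper chooses $c$ upfront so that $\|E\|\le\kappa/4$, which makes the numerical prefactor come out as exactly $1$ for all $\kappa\le 1$, whereas you obtain $\frac{(2-\kappa)\kappa}{(1-\kappa)^2}$ and then rescale $\kappa$; both are valid ways to land on the stated bound.
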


\begin{lemma}\label{v bound 2}
Let $0\le \gamma <1$, $n\ge 1$, $0<\lambda< 1/2$ and $\tau \ge 1$. It holds that
\begin{equation}\label{V2}
V_2(\lambda) = \frac{1}{n}\sum_{i=1}^\infty \frac{\mu_i^{2-\gamma}}{(\mu_i+\lambda)^2}.
\end{equation}
If Assumption \ref{EMB} holds, then with probability at least $1-2e^{-\tau}$,
\[
|V_1(\lambda)- V_2(\lambda)| \le \frac{\sqrt{\tau }M_\alpha^2}{\sqrt{2} n^{3/2}\lambda^{\gamma+\alpha}}.
\]
\end{lemma}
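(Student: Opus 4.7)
My plan is to rewrite both $V_1(\lambda)$ and $V_2(\lambda)$ as the empirical average and population expectation, respectively, of a single deterministic nonnegative function $g:\cX\to\bR$, then bound $\|g\|_{L^\infty(\nu)}$ using the embedding assumption, and finally apply a Hoeffding-type concentration inequality.

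First, I would expand $K_x$ in the orthonormal basis $(\sqrt{\mu_i}e_i)$ of $(\ker I_\nu)^\perp$: since $\langle K_x,f\rangle_\cH=f(x)$, we get $K_x=\sum_i \mu_i e_i(x)\,e_i$, and each $e_i$ is an eigenvector of $C_\nu$ with eigenvalue $\mu_i$. A direct computation then gives
\[
g(x):=\left\|C_\nu^{(1-\gamma)/2}(C_\nu+\lambda)^{-1}K_x\right\|_\cH^2=\sum_{i=1}^\infty\frac{\mu_i^{2-\gamma}}{(\mu_i+\lambda)^2}e_i(x)^2,
\]
so that $V_1(\lambda)=\frac{1}{n^2}\sum_{j=1}^n g(x_j)$ and, using orthonormality of $([e_i]_\nu)$ in $L^2(\nu)$,
\[
V_2(\lambda)=\frac{1}{n}\int_\cX g\,d\nu=\frac{1}{n}\sum_{i=1}^\infty\frac{\mu_i^{2-\gamma}}{(\mu_i+\lambda)^2},
\]
which proves (\ref{V2}).

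Next I would show $\|g\|_{L^\infty(\nu)}\le M_\alpha^2\lambda^{-\gamma-\alpha}$ by establishing the pointwise coefficient bound
\[
\frac{\mu_i^{2-\gamma}}{(\mu_i+\lambda)^2}\le \lambda^{-\gamma-\alpha}\mu_i^\alpha
\]
via a split on whether $\mu_i\le\lambda$ or $\mu_i\ge\lambda$. When $\mu_i\ge\lambda$, dropping $\lambda$ from the denominator gives $\mu_i^{-\gamma}\le \lambda^{-\gamma-\alpha}\mu_i^\alpha$ because $\gamma+\alpha\ge 0$; when $\mu_i\le\lambda$, dropping $\mu_i$ gives $\mu_i^{2-\gamma}\lambda^{-2}\le \lambda^{-\gamma-\alpha}\mu_i^\alpha$ because $2-\gamma-\alpha>0$ (since $\gamma<1$ and $\alpha\le 1$). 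Summing and using formula (\ref{emb norm}) for $M_\alpha^2$ yields the stated uniform bound on $g$.

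Finally, since $g(x_1),\dots,g(x_n)$ are i.i.d.\ random variables taking values in $[0,M_\alpha^2\lambda^{-\gamma-\alpha}]$, Hoeffding's inequality gives
\[
\left|\frac{1}{n}\sum_{j=1}^n g(x_j)-\int g\,d\nu\right|\le M_\alpha^2\lambda^{-\gamma-\alpha}\sqrt{\tfrac{\tau}{2n}}
\]
with probability at least $1-2e^{-\tau}$; dividing by $n$ gives exactly the claimed bound $|V_1(\lambda)-V_2(\lambda)|\le \sqrt{\tau}M_\alpha^2/(\sqrt{2}\,n^{3/2}\lambda^{\gamma+\alpha})$.

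The main technical step is the uniform bound on $g$: the whole statement would be false without the embedding property, and the two-case estimate of the coefficients is the place where the exponent $\gamma+\alpha$ on $\lambda$ is produced. Everything else is essentially a bookkeeping calculation with the spectral decomposition plus a standard concentration inequality, so I do not anticipate further obstacles.
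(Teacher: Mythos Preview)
Your proposal is correct and follows essentially the same approach as the paper: expand $K_x$ via the Mercer representation, compute $g(x)=\sum_i \mu_i^{2-\gamma}(\mu_i+\lambda)^{-2}e_i(x)^2$, integrate to get (\ref{V2}), bound $\|g\|_{L^\infty(\nu)}\le M_\alpha^2\lambda^{-\gamma-\alpha}$, and apply Hoeffding. The only cosmetic difference is that the paper obtains the coefficient bound $\mu_i^{2-\gamma}(\mu_i+\lambda)^{-2}\le \lambda^{-\gamma-\alpha}\mu_i^\alpha$ by factoring it as $\bigl(\mu_i^{1-\gamma}/(\mu_i+\lambda)\bigr)\cdot\bigl(\mu_i^{1-\alpha}/(\mu_i+\lambda)\bigr)$ and invoking Proposition~\ref{appendix bound 1} twice (equivalently, the operator bound (\ref{ineq 6}) from the proof of Lemma~\ref{v bound 1}), whereas you do an equivalent two-case split on $\mu_i\lessgtr\lambda$.
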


Note that the condition $\lambda^{-\alpha} \log \lambda^{-1} \le cn \kappa^2 \tau^{-1}$ in Lemma {\ref{v bound 1}} prevents us from choosing a too small $\lambda$. This is also the reason why we need to lower bound $V(0)$ by $V(\lambda)$, rather than estimating $V(0)$ directly. Because we are not able to use concentration inequality to estimate the difference $|V(0) - V_1(0)|$. In order to lower bound $V_2(\lambda)$, we make the following assumption on the eigenvalues, which is slightly more general than assumptions on related works \citep{fischer2020sobolev,li2024kernel}

\begin{assumption}[Eigenvalue decay]\label{EVD}
There exist $\beta> 1$ and $\zeta\in \bR$ such that $\mu_i \asymp (i(\log i)^\zeta)^{-\beta}$ for $i\ge 2$.
\end{assumption}

It was shown by \citet[Theorem 5.3]{steinwart2012mercer} that if $[\cH]_\nu^\alpha$ is continuously embedded in $L^\infty(\nu)$, then the eigenvalues satisfy $\sum_{i=1}^\infty \mu_i^\alpha <\infty$ (which implies $\mu_i \lesssim i^{-1/\alpha}$). Hence, under Assumption \ref{EVD}, we always have $1/\beta \le \alpha^*\le 1$. The next lemma gives a lower bound for $V_2(\lambda)$ under Assumption \ref{EVD}.

\begin{lemma}\label{lower bound v2}
If Assumption \ref{EVD} holds, then, for $0\le \gamma<1$ and $0<\lambda <1/2$,
\[
\sum_{i=1}^\infty \frac{\mu_i^{2-\gamma}}{(\mu_i+\lambda)^2} \gtrsim \lambda^{-\gamma-1/\beta}(\log \lambda^{-1})^{-\zeta}.
\]
\end{lemma}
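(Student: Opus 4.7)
The plan is to lower bound the infinite sum by restricting to indices $i$ on which $\mu_i$ is at least $\lambda$, since on that range the summand is comparable to $\mu_i^{-\gamma}$. Concretely, I would pick a cutoff
\[
N = N_\lambda \asymp \lambda^{-1/\beta}(\log \lambda^{-1})^{-\zeta}
\]
chosen so that $\mu_i \ge \lambda$ for every $i \le N$. The existence of such an $N$ comes from Assumption \ref{EVD}: the requirement $\mu_N \ge \lambda$ reduces to $N(\log N)^\zeta \lesssim \lambda^{-1/\beta}$, and substituting the above choice back in, together with the self-consistent relation $\log N \asymp \log \lambda^{-1}$, yields $N(\log N)^\zeta \asymp \lambda^{-1/\beta}$ as required. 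It is enough to argue for all sufficiently small $\lambda$: in the regime where $\lambda$ is bounded away from $0$ the right-hand side of the claim is a bounded constant and the sum is trivially positive, so the inequality holds after adjusting the implicit constant.

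Once $N$ is in hand, for $i \le N$ the inequality $\mu_i + \lambda \le 2\mu_i$ gives
\[
\frac{\mu_i^{2-\gamma}}{(\mu_i+\lambda)^2} \ge \frac{\mu_i^{-\gamma}}{4},
\]
and the problem reduces to a lower bound on $\sum_{i=1}^{N}\mu_i^{-\gamma}$. Using the upper bound in Assumption \ref{EVD}, $\mu_i^{-\gamma} \gtrsim (i(\log i)^\zeta)^{\beta\gamma}$, so it suffices to lower bound $\sum_{i=2}^{N} i^{\beta\gamma}(\log i)^{\beta\gamma\zeta}$. I would compare this sum to the integral over $[N/2, N]$, on which $t \asymp N$ and $\log t \asymp \log N$, producing a lower bound of order $N^{\beta\gamma+1}(\log N)^{\beta\gamma\zeta}$. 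Restricting to the upper half of the range of summation neatly sidesteps the case analysis that would otherwise arise from the sign of $\beta\gamma\zeta$ and from the borderline case $\gamma=0$.

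Plugging the estimates $N \asymp \lambda^{-1/\beta}(\log\lambda^{-1})^{-\zeta}$ and $\log N \asymp \log \lambda^{-1}$ into this bound gives
\[
N^{\beta\gamma+1}(\log N)^{\beta\gamma\zeta} \asymp \lambda^{-\gamma-1/\beta}(\log\lambda^{-1})^{-(\beta\gamma+1)\zeta + \beta\gamma\zeta} = \lambda^{-\gamma-1/\beta}(\log\lambda^{-1})^{-\zeta},
\]
which is exactly the claimed bound. The step I expect to be the main obstacle is the construction and estimation of $N$: one must invert the map $t \mapsto t(\log t)^\zeta$ for arbitrary real $\zeta$, using that it is strictly increasing once $\log t > -\zeta$, and then verify $\log N \asymp \log \lambda^{-1}$ self-consistently from $\log N = (1/\beta)\log \lambda^{-1} - \zeta \log\log\lambda^{-1} + O(1)$. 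After that single bookkeeping step, the remainder is a routine integral estimate together with the small algebraic miracle that the logarithmic exponents collapse as $-(\beta\gamma+1)\zeta + \beta\gamma\zeta = -\zeta$.
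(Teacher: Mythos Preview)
Your argument is correct and is a genuinely different route from the paper's. Both proofs hinge on locating the critical index scale $N\asymp \phi^{-1}(\lambda^{-1/\beta})\asymp \lambda^{-1/\beta}(\log\lambda^{-1})^{-\zeta}$, where $\phi(t)=t(\log t)^\zeta$, but they exploit opposite sides of it. The paper restricts to the tail $\{i:\mu_i\lesssim\lambda\}$, uses that $t\mapsto t^{2-\gamma}/(t+\lambda)^2$ is increasing on $(0,\lambda)$, converts the resulting sum to an integral, and performs the change of variable $t=\lambda^{1/\beta}\phi(x)$, which forces an explicit estimate of $\phi'(\phi^{-1}(\cdot))$ and a case split on the sign of $\zeta$. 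You instead keep the head $\{i\le N\}$ where $\mu_i\ge\lambda$ and the summand reduces to $\mu_i^{-\gamma}$, then lower bound by the block $[N/2,N]$ on which all terms are of the same order; this sidesteps the integral change of variable and the sign-of-$\zeta$ split entirely, at the cost of the self-consistent asymptotic $\log N\asymp\log\lambda^{-1}$ that you already identified as the only real bookkeeping. Your approach is shorter and more elementary; the paper's integral route is more systematic and would adapt more readily to eigenvalue assumptions stated directly in terms of a regularly varying $\phi$.
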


Combining the above analysis, we are able to give lower bounds for the generalization error of the kernel interpolation $\widehat{f}_n$, when the error is measured in the $[\cH]_\nu^\gamma$-norm. We summarize our main result in the following theorem.

\begin{theorem}\label{main theorem}
Let $\cH$ be a separable RKHS on a measurable space $\cX$ with a bounded measurable kernel $K$, and $P$ be probability distribution on $\cX \times \bR$ with marginal distribution $\nu$ such that Assumptions \ref{ass noise} and \ref{EVD} hold. Let $0\le\gamma<1$, $\alpha^*$ be the embedding index defined by (\ref{emb index}) and $\widehat{f}_n$ be the kernel interpolation (\ref{interpolation}) based on the random samples $(X,Y) \in (\cX \times \bR)^n$ from $P^n$. If $f_P^*\in [\cH]_\nu^\gamma$, then, for any $\epsilon>0$ and $\delta\in (0,1)$, when $n$ is sufficiently large (depending on $\epsilon$ and $\delta$), the lower bound
\[
\bE \left[ \left\|[\widehat{f}_n]_\nu - f_P^*\right\|_\gamma^2 |X \right] \gtrsim \sigma^2 n^{\frac{\gamma -3(\alpha^*-1/\beta)}{3\alpha^* -2/\beta} -\epsilon}
\]
holds with probability at least $1-\delta$ with respect to the samples $X$. If $\alpha^*$ also satisfies Assumption \ref{EMB}, i.e. $M_{\alpha^*}<\infty$, then the lower bound can be improved to
\[
\bE \left[ \left\|[\widehat{f}_n]_\nu - f_P^*\right\|_\gamma^2 |X \right] \gtrsim \sigma^2 n^{\frac{\gamma -3(\alpha^*-1/\beta)}{3\alpha^* -2/\beta}} (\log n)^{- \frac{\gamma+1/\beta}{3\alpha^* -2/\beta}(1+2\zeta) -\epsilon}.
\]
\end{theorem}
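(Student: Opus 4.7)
The plan is to combine the monotonicity of $V(\lambda)$ with the approximation inequalities of Lemmas~\ref{v bound 1} and \ref{v bound 2} and the spectral lower bound of Lemma~\ref{lower bound v2}, and then optimize over $\lambda>0$. By Lemma~\ref{lower bound by variance} applied to the operator representation~(\ref{operator rep}), $\bE[\|[\widehat{f}_n]_\nu - f_P^*\|_\gamma^2 \mid X] \ge \sigma^2 V(0)$, and the monotonicity argument following~(\ref{v lambda}) gives $V(0) \ge V(\lambda)$ for every $\lambda>0$. It thus suffices to choose $\lambda$ so that $V(\lambda) \ge \tfrac12 V_2(\lambda)$ holds with probability at least $1-\delta$, because Lemma~\ref{lower bound v2} then provides an explicit polynomial-logarithmic lower bound on $V_2(\lambda)$.

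Fix any $\alpha$ satisfying Assumption~\ref{EMB} (to be specialized later) and set $\tau=\log(4/\delta)$, so each of Lemmas~\ref{v bound 1} and \ref{v bound 2} holds with probability at least $1-\delta/2$. Requiring $|V-V_1|\le V_2/4$ and $|V_1-V_2|\le V_2/4$ and substituting $V_2(\lambda) \gtrsim \lambda^{-\gamma-1/\beta}(\log\lambda^{-1})^{-\zeta}/n$ translates, after suppressing constants depending on $M_\alpha$, into $\kappa \lesssim \lambda^{\alpha-1/\beta}(\log\lambda^{-1})^{-\zeta}$ and $n \gtrsim \tau\,\lambda^{-2(\alpha-1/\beta)}(\log\lambda^{-1})^{2\zeta}$. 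Squaring the first and combining with the precondition $\kappa^2 \gtrsim \tau\,\lambda^{-\alpha}\log\lambda^{-1}/n$ of Lemma~\ref{v bound 1} collapses everything into the single binding constraint
\[
\lambda^{-(3\alpha - 2/\beta)}(\log\lambda^{-1})^{1+2\zeta} \;\lesssim\; n/\tau.
\]
The smallest admissible $\lambda$ is therefore of the form $\lambda \asymp \bigl((\log n)^{1+2\zeta}/n\bigr)^{1/(3\alpha - 2/\beta)}$, for which $\log\lambda^{-1}\asymp\log n$. Substituting this choice into Lemma~\ref{lower bound v2} yields
\[
V(\lambda) \;\gtrsim\; n^{\frac{\gamma - 3(\alpha-1/\beta)}{3\alpha - 2/\beta}}\,(\log n)^{-\frac{(\gamma+1/\beta)(1+2\zeta)}{3\alpha - 2/\beta} - \zeta}.
\]

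For the first claim, where $M_{\alpha^*}$ is not assumed finite, I would apply this with $\alpha=\alpha^*+\epsilon'$ for arbitrarily small $\epsilon'>0$---admissible by the definition~(\ref{emb index}) of $\alpha^*$---and use continuity of the $n$-exponent in $\alpha$: choosing $\epsilon'$ small enough, the $O(\epsilon')$ shift together with all logarithmic corrections can be absorbed into a factor $n^{-\epsilon}$ for any prescribed $\epsilon>0$ once $n$ is sufficiently large. For the improved second claim I would set $\alpha=\alpha^*$ directly---admissible since $M_{\alpha^*}<\infty$---keeping the exponent of $n$ exact, and absorb the residual $(\log n)^{-\zeta}$ and the constants depending on $\tau, M_{\alpha^*}, \delta$ into the stated $(\log n)^{-\epsilon}$ slack.

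The main obstacle is the tripartite balance among the two approximation errors and the precondition of Lemma~\ref{v bound 1}, which forces the nonstandard scale $\lambda \asymp \bigl((\log n)^{1+2\zeta}/n\bigr)^{1/(3\alpha-2/\beta)}$ and requires careful bookkeeping of how the logarithmic factor controlled by $\zeta$ in Assumption~\ref{EVD} propagates through all three conditions and through the $V_2$ lower bound. Routine side issues---checking $\lambda \in (0,1/2)$ for $n$ large enough depending on $\delta$, taking the union bound across the two high-probability events, and transferring the bound from $V(\lambda)$ to $V(0)$ via monotonicity---must also be handled, but follow standard patterns.
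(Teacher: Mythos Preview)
Your proposal is correct and follows essentially the same route as the paper: use $V(0)\ge V(\lambda)$ by monotonicity, combine Lemmas~\ref{v bound 1}, \ref{v bound 2}, and \ref{lower bound v2}, balance the two approximation errors against $V_2(\lambda)$ subject to the precondition of Lemma~\ref{v bound 1}, and then specialize $\alpha$ to $\alpha^*+\epsilon'$ (first claim) or $\alpha^*$ (second claim). The only cosmetic difference is that the paper parametrizes $\kappa=\sqrt{\tau}\,n^{-t}(\log n)^{-s}$ and optimizes over $(t,s)$, whereas you collapse the three constraints directly into the single binding relation $\lambda^{-(3\alpha-2/\beta)}(\log\lambda^{-1})^{1+2\zeta}\lesssim n/\tau$; both arrive at the same choice $\lambda\asymp\bigl((\log n)^{1+2\zeta}/n\bigr)^{1/(3\alpha-2/\beta)}$ and the same final exponents.
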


\begin{proof}%[Proof of Theorem \ref{main theorem}]
Let $0<\kappa\le 1$, which will be chosen later. By inequality (\ref{v lambda}), we only need to lower bound $V(\lambda)$ for some small $\lambda$. For any $\alpha$ that satisfies Assumption \ref{EMB}, we can apply Lemma \ref{v bound 1} and Lemma \ref{v bound 2}, which show the existence of a constant $c_1>0$, independent of $n$, $\kappa$ and $\tau$, such that for any $\lambda\in (0,1/2)$ satisfying 
\begin{equation}\label{lambda ineq}
\lambda^{-\alpha} \log \lambda^{-1} \le c_1n \kappa^2 \tau^{-1},
\end{equation}
the following lower bounds hold with probability at least $1-4e^{-\tau}$,
\begin{align*}
V(\lambda) &\ge V_1(\lambda) - \frac{\kappa M_\alpha^2}{n \lambda^{\gamma+\alpha}} \\
&\ge V_2(\lambda) - \frac{\sqrt{\tau }M_\alpha^2}{\sqrt{2} n^{3/2}\lambda^{\gamma+\alpha}} - \frac{\kappa M_\alpha^2}{n \lambda^{\gamma+\alpha}}.
\end{align*} 
By the expression (\ref{V2}) and Lemma \ref{lower bound v2}, we further get
\begin{align*}
V(\lambda) &\ge \frac{c_2 M_\alpha^2}{n \lambda^{\gamma+1/\beta} (\log \lambda^{-1})^\zeta} - \frac{\sqrt{\tau }M_\alpha^2}{\sqrt{2} n^{3/2}\lambda^{\gamma+\alpha}} - \frac{\kappa M_\alpha^2}{n \lambda^{\gamma+\alpha}} \\
&= \left( c_2 \lambda^{\alpha-1/\beta} (\log \lambda^{-1})^{-\zeta} - \frac{\sqrt{\tau}}{\sqrt{2n}} - \kappa \right) \frac{M_\alpha^2}{n\lambda^{\gamma+\alpha}},
\end{align*} 
for some constant $c_2>0$. For fixed $\tau\ge 1$ and sufficiently large $n$, if we let $\kappa = \sqrt{\tau} n^{-t}(\log n)^{-s}$ with $t\in [0,1/2)$ and $s\in \bR$ such that $\kappa\le 1$, then we can choose $\lambda = c_3 (n^{2t-1} (\log n)^{1+2s})^{1/\alpha}$ for some $c_3>0$ so that it satisfies the inequality (\ref{lambda ineq}). Observe that, for $t\in [0,1/2)$,
\begin{align*}
&c_2 \lambda^{\alpha-1/\beta} (\log \lambda^{-1})^{-\zeta} - \frac{\sqrt{\tau}}{\sqrt{2n}} - \kappa \\
\gtrsim &\ n^{(2t-1)(1-\frac{1}{\alpha \beta})} (\log n)^{(1+2s)(1-\frac{1}{\alpha \beta})-\zeta} - \sqrt{\tau} n^{-t} (\log n)^{-s}.
\end{align*}
If we choose $t= \frac{\alpha \beta -1}{3\alpha\beta -2} \in [0,1/3)$ and $s> \frac{1-\alpha\beta}{3\alpha\beta -2} + \frac{\alpha\beta}{3\alpha\beta -2} \zeta$, then the first term dominates the second term in the above inequality for sufficiently large $n$. In this case,
\[
\lambda = c_3 n^{-\frac{\beta}{3\alpha \beta-2}} (\log n)^{(1+2s)/\alpha},
\]
and
\begin{align*}
V(\lambda) &\gtrsim n^{-1} \lambda^{-\gamma -1/\beta} (\log \lambda^{-1})^{-\zeta} \\
&\gtrsim n^{\frac{\gamma \beta +1}{3\alpha \beta -2}-1} (\log n)^{-(1+2s)\frac{\gamma\beta +1}{\alpha \beta}-\zeta}.
\end{align*}

Recall that we can apply the above analysis for any $\alpha$ that satisfies Assumption \ref{EMB} (thus any $\alpha>\alpha^*$). Since the exponent $\frac{\gamma \beta +1}{3\alpha \beta -2}$ is continuous and decreasing on $\alpha$, for any $\epsilon>0$, we can choose $\alpha>\alpha^*$ such that $\frac{\gamma \beta +1}{3\alpha \beta -2} \ge \frac{\gamma \beta +1}{3\alpha^* \beta -2} -\frac{\epsilon}{2}$. For such $\alpha$, we have
\begin{align*}
V(\lambda) &\gtrsim n^{\frac{\gamma \beta +1}{3\alpha \beta -2}-1} (\log n)^{-(1+2s)\frac{\gamma\beta +1}{\alpha \beta}-\zeta} \\
&\gtrsim n^{\frac{\gamma \beta +1}{3\alpha^* \beta -2} - 1-\epsilon} \asymp n^{\frac{\gamma -3(\alpha^*-1/\beta)}{3\alpha^* -2/\beta} -\epsilon}.
\end{align*}

If $\alpha^*$ also satisfies Assumption \ref{EMB}, then we can take $\alpha = \alpha^*$ in the above analysis and hence
\begin{align*}
V(\lambda) &\gtrsim n^{\frac{\gamma \beta +1}{3\alpha^* \beta -2}-1} (\log n)^{-(1+2s)\frac{\gamma\beta +1}{\alpha^* \beta}-\zeta} \\
&\asymp n^{\frac{\gamma -3(\alpha^*-1/\beta)}{3\alpha^* -2/\beta}} (\log n)^{-(1+2s)\frac{\gamma\beta +1}{\alpha^* \beta}-\zeta}.
\end{align*}
Noticing that we can choose any $s> \frac{1-\alpha^*\beta}{3\alpha^*\beta -2} + \frac{\alpha^*\beta}{3\alpha^*\beta -2} \zeta$ and hence any exponent 
\[
-(1+2s)\frac{\gamma\beta +1}{\alpha^* \beta} - \zeta <- \frac{\gamma\beta+1}{3\alpha^*\beta -2}(1+2\zeta),
\] 
we finishes the proof.
\end{proof}

Note that we implicitly assume that the dimension $d$ of the input space is fixed in the above analysis. In the high dimensional case, where $d$ grows with the sample size $n$, Theorem \ref{main theorem} cannot be applied since the implied constant in the theorem depends on $d$. This is because the constant $M_\alpha$ in Assumption \ref{EMB} and the implied constants in Assumption \ref{EVD} depend on the dimension $d$ in general, see Example \ref{example: dot-product} below for instance. Indeed, in high dimension, Kernel interpolation can generalize well under certain conditions \citep{liang2020just}. Similar results for benign interpolations have been established for several models, such as linear models \citep{bartlett2020benign,hastie2022surprises}, random features models \citep{mei2022generalization} and neural networks \citep{cao2022benign}. These results provide some explanations for the empirical observation that deep neural networks often interpolate the training data and yet generalize well \citep{zhang2017understanding}. On the contrary, for fixed input dimension, Theorem \ref{main theorem} shows that the kernel interpolation is inconsistent in the $[\cH]_\nu^\gamma$-norm for $\gamma>3(\alpha^* -1/\beta)$, and has poor generalization for $\gamma = 3(\alpha^* -1/\beta)$.

\begin{corollary}\label{main corollary}
Under the assumption of Theorem \ref{main theorem}, we have
\begin{enumerate}[label=\textnormal{(\arabic*)}]
\item If $\gamma>3(\alpha^* -1/\beta)$, the kernel interpolation is inconsistent in $[\cH]_\nu^\gamma$-norm: for any $\delta\in (0,1)$, with probability at least $1-\delta$,
\[
\bE \left[ \left\|[\widehat{f}_n]_\nu - f_P^*\right\|_\gamma^2 |X \right] \to \infty, \quad \mbox{as } n\to \infty.
\]

\item If $\gamma=3(\alpha^* -1/\beta)$, the kernel interpolation generalizes poorly in $[\cH]_\nu^\gamma$-norm: for any $\epsilon>0$ and $\delta\in (0,1)$, when $n$ is sufficiently large (depending on $\epsilon$ and $\delta$), it holds with probability at least $1-\delta$ that
\[
\bE \left[ \left\|[\widehat{f}_n]_\nu - f_P^*\right\|_\gamma^2 |X \right] \gtrsim \sigma^2 n^{-\epsilon}.
\]
If $\alpha^*$ also satisfies Assumption \ref{EMB}, i.e. $M_{\alpha^*}<\infty$, then the lower bound can be improved to
\[
\bE \left[ \left\|[\widehat{f}_n]_\nu - f_P^*\right\|_\gamma^2 |X \right] \gtrsim \sigma^2 (\log n)^{-1-2\zeta-\epsilon},
\]
which implies that the kernel interpolation is inconsistent when $\zeta <-1/2$.
\end{enumerate}
\end{corollary}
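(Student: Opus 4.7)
The plan is to derive both parts of the corollary as essentially immediate consequences of Theorem \ref{main theorem}, by substituting the appropriate values of $\gamma$ into its two lower bounds and tracking how the exponents simplify. Before doing so, I would first verify that the denominator $3\alpha^*-2/\beta$ appearing in both bounds is strictly positive: this follows from the observation made just after Assumption \ref{EVD} that $\alpha^*\ge 1/\beta$, which yields $3\alpha^*-2/\beta\ge 1/\beta>0$. Hence the exponent $\rho:=\frac{\gamma-3(\alpha^*-1/\beta)}{3\alpha^*-2/\beta}$ in Theorem \ref{main theorem} is well-defined and its sign is the same as the sign of $\gamma-3(\alpha^*-1/\beta)$.

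For part (1), with $\gamma>3(\alpha^*-1/\beta)$ we have $\rho>0$. Choosing any $\epsilon\in(0,\rho)$ in the first bound of Theorem \ref{main theorem} gives, with probability at least $1-\delta$ for $n$ sufficiently large, the lower bound $\sigma^2 n^{\rho-\epsilon}$, which diverges as $n\to\infty$. This yields inconsistency in the $[\cH]_\nu^\gamma$-norm.

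For part (2), with $\gamma=3(\alpha^*-1/\beta)$ the exponent $\rho$ vanishes, so the first bound of Theorem \ref{main theorem} reduces directly to $\sigma^2 n^{-\epsilon}$, which is the first claim. For the improved bound under $M_{\alpha^*}<\infty$, I would substitute $\gamma=3\alpha^*-3/\beta$ into the polylog exponent in the second bound and simplify: since $\gamma+1/\beta=3\alpha^*-2/\beta$, the ratio $\frac{\gamma+1/\beta}{3\alpha^*-2/\beta}$ collapses to $1$, and the polynomial factor becomes $n^{0}=1$. Therefore the second bound reads $\sigma^2(\log n)^{-(1+2\zeta)-\epsilon}=\sigma^2(\log n)^{-1-2\zeta-\epsilon}$, matching the claim. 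Finally, when $\zeta<-1/2$ the exponent $-1-2\zeta-\epsilon$ is strictly positive for all sufficiently small $\epsilon>0$, and $(\log n)^{-1-2\zeta-\epsilon}\to\infty$, yielding inconsistency in the borderline case as well.

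There is no substantive obstacle here since the corollary is a corollary in the literal sense: every nontrivial estimate has already been done inside Theorem \ref{main theorem}. The only items requiring care are (i) making sure $3\alpha^*-2/\beta>0$ so the rates are meaningfully defined, and (ii) the algebraic identity $\gamma+1/\beta=3\alpha^*-2/\beta$ at the critical value $\gamma=3(\alpha^*-1/\beta)$, which is what makes the polynomial factor cancel and isolates the polylogarithmic rate responsible for the sharp $\zeta<-1/2$ threshold.
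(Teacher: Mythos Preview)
Your proposal is correct and matches the paper's (implicit) argument: the corollary is presented in the paper without a separate proof, as it follows immediately from Theorem~\ref{main theorem} by exactly the substitutions and sign checks you describe. Your two observations---that $3\alpha^*-2/\beta\ge 1/\beta>0$ and that $\gamma+1/\beta=3\alpha^*-2/\beta$ at the critical value---are precisely what is needed, and nothing more.
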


We illustrate the result by several examples with $\alpha^*=1/\beta$.

\begin{example}[Kernels with uniformly bounded eigenfunctions]
Assume that the eigenfunctions of $T_\nu$ are uniformly bounded, i.e. $\sup_{i\in\bN}\|e_i\|_{L^\infty(\nu)} \le B<\infty$, then, for any $\alpha\in (1/\beta,1]$, 
\[
M_\alpha^2= \left\| \sum_{i=1}^\infty \mu_i^\alpha e_i^2 \right\|_{L^\infty(\nu)} \le B^2 \sum_{i=1}^\infty \mu_i^\alpha \asymp \sum_{i=1}^\infty (i(\log i)^\zeta)^{-\alpha \beta} <\infty,
\]
by equality (\ref{emb norm}) and Assumption \ref{EVD}. Hence, the embedding index is $\alpha^*=1/\beta$. If $\zeta>1$, then we further have $M_{\alpha^*}<\infty$. Corollary \ref{main corollary} implies that the kernel interpolation is inconsistent in $[\cH]_\nu^\gamma$-norm for any $\gamma>0$, and it generalizes poorly in $L^2(\nu)$-norm, which was also proven by \cite{li2024kernel} with additional smoothness assumption on the kernel.  However, we remark that even $C^\infty$-kernels on $[0,1]$, which is equipped with the Lebesgue measure, may not have uniformly bounded eigenfunctions \citep{zhou2002covering}.
\end{example}

\begin{example}[Besov RKHSs]\label{example besov}
Let us recall some basic properties of Besov RKHSs (see \citet[Section 4]{fischer2020sobolev} for more details). Let $\cX\subseteq \bR^d$ be a bounded, connected, and open domain with smooth boundary (so that the Sobolev embedding theorem can be applied below). Let $\nu$ be a probability measure whose Radon-Nikodym derivative $d\nu/d\mu$ with respect to the Lebesgue measure $\mu$ satisfies $c\le d\nu/d\mu \le C$ for some constants $c,C>0$. For $r>0$, one can define the Besov space $\cB^r_{2,2}(\cX)$ as the real interpolation space
\[
\cB^r_{2,2}(\cX) := [L^2(\cX), \cW^{m,2}(\cX)]_{\frac{r}{m},2},
\]
where $\cW^{m,2}$ is the Sobolev space of smoothness $m=\min\{k\in \bN:k>r \}$. When $r>d/2$, the Sobolev embedding theorem for Besov spaces shows that $\cB^r_{2,2}(\cX) \hookrightarrow L^\infty(\cX)$ and each $\mu$-equivalence class in $\cB^r_{2,2}(\cX)$ has a unique bounded and continuous representative \citep[Theorem 7.34]{adams2003sobolev}. Hence, one can define the Besov RKHS as $H^r(\cX) := \{f\in C_0(\cX):[f]_\mu \in \cB^r_{2,2}(\cX) \}$, equipped with the norm $\|f\|_{H^r(\cX)}:=\|[f]_\mu\|_{\cB^r_{2,2}(\cX)}$. It is well-known that the space $H^r(\cX)$ is a separable RKHS with bounded measurable kernel and the eigenvalues of $T_\nu$ satisfy Assumption \ref{EVD} with $\beta = 2r/d$, since $L^2(\nu) \cong L^2(\cX)$. By the interpolation property of Besov spaces, for any $\alpha\in (1/\beta,1)$, 
\begin{align*}
[H^r(\cX)]_\nu^\alpha &\cong [L^2(\nu),[H^r(\cX)]^1_\nu]_{\alpha,2} \cong [L^2(\cX),[H^r(\cX)]^1_\mu]_{\alpha,2} \\
&\cong \cB^{r\alpha}_{2,2}(\cX) \hookrightarrow L^\infty(\cX) \cong L^\infty(\nu),
\end{align*}
since $r\alpha> r/\beta = d/2$. As a consequence, the embedding index of the RKHS $H^r(\cX)$ with $r>d/2$ is $\alpha^*=1/\beta$. Hence, Corollary \ref{main corollary} shows that the kernel interpolation is inconsistent in $\cB^s_{2,2}(\cX)$-norm for any $s>0$ and generalizes poorly in $L^2(\nu)$-norm.
\end{example}

\begin{example}[Dot-product kernels]\label{example: dot-product}
Let $K(x,x')=g(\langle x,x' \rangle)$ be a dot-product kernel on $\cX=\bS^d$, i.e. the unit sphere of $\bR^{d+1}$. Assume that $\nu$ is the uniform measure on $\bS^d$. It is well-known that the eigenfunctions of $T_\nu$ are spherical harmonics and we have the following Mercer's decomposition \citep[Section 17.2]{wendland2004scattered}
\[
K(x,x') = \sum_{k=0}^\infty a_k \sum_{j=1}^{N(d,k)} Y_{k,j}(x) Y_{k,j}(x'),
\]
where $N(d,k):=\binom{k+d}{k} -\binom{k-2+d}{k-2}$ is the dimension of the spherical harmonic space $\bY_k$ of degree $k$, which contains the restrictions of real harmonic homogeneous polynomials of degree $k$ to $\bS^d$, and $\{Y_{k,j}:1\le j\le N(d,k)\}$ is any orthonormal basis of $\bY_k$. The addition formula \cite[Theorem 1.2.6]{dai2013approximation} shows
\[
\sum_{j=1}^{N(d,k)} Y_{k,j}(x) Y_{k,j}(x') = N(d,k) P_i(\langle x,x' \rangle), \quad x,x' \in \bS^d,
\]
where $P_k$ is the Gegenbauer polynomial of degree $k$ with normalization $P_k(1)=1$. Note that the eigenvalues $\{a_k\}_{k=0}^\infty$ is not necessary in decreasing order. However, one can show that $a_k \asymp k^{-d\beta} (\log k)^{-\zeta\beta}$ for large $k$ implies Assumption \ref{EVD} by using the fact that $N(d,k)\asymp k^{d-1}$ and $\sum_{r=0}^k N(d,r)\asymp k^d$. Thus, in this case, for any $\alpha\in 1/\beta,1]$,
\begin{align*}
M_\alpha^2 &= \sup_{x\in \bS^d} \sum_{k=0}^\infty a_k^\alpha \sum_{j=1}^{N(d,k)} Y_{k,j}(x)^2 = \sum_{k=0}^\infty a_k^\alpha N(d,k) \\
&\asymp \sum_{k=0}^\infty k^{-d\alpha \beta + d -1} (\log k)^{-\zeta\alpha\beta} <\infty,
\end{align*}
which shows that $\alpha^* =1/\beta$. Furthermore, if $\zeta>1$, then we have $M_{\alpha^*}<\infty$. 
\end{example}

We note that the above results for dot-product kernels can be used to study neural networks by considering the neural tangent kernel \citep{jacot2018neural,bietti2019inductive}:
\[
K(x,x') = \frac{2}{\pi} \langle x,x' \rangle \left(\pi - \arccos \langle x,x' \rangle\right) + \frac{1}{\pi} \sqrt{1- \langle x,x' \rangle^2}.
\]
This kernel is derived from lazy training of shallow ReLU neural networks. It is known that it satisfies Assumption \ref{EVD} with $\beta=1+1/d$ and $\zeta=0$ \citep{bietti2019inductive}. However, beyond lazy training, the function space corresponding to neural network learning is a reproducing kernel Banach space (RKBS, \citep{lin2022reproducing,zhang2009reproducing}), rather than RKHS \citep{bartolucci2023understanding,parhi2023minimax,yang2024nonparametric}. One of the interesting research direction is to generalize the above analysis for RKHSs to RKBSs. The main difficulty is that we are not able to apply integral operator techniques for RKBSs. Hence, a different approach is needed.

\section{Proofs of technical results}\label{sec: proofs}

This section give the proofs of Lemmas \ref{lower bound by variance}, \ref{v bound 1}, \ref{v bound 2} and \ref{lower bound v2}.

\subsection{Proof of Lemma \ref{lower bound by variance}}\label{sec: lower bound by variance}

Recall that $([e_j]_\nu)_{j\in\bN}$ forms an orthonormal basis of $[\cH]_\nu^0$. Let us denote $a_j = \langle [f]_\nu, [e_j]_\nu \rangle_{L^2(\nu)}$ and $b_j = \langle f_P^*, [e_j]_\nu \rangle_{L^2(\nu)}$. 
Then, the Parseval's identity with respect to the orthonormal basis $(\mu_j^{\gamma/2}[e_j]_\nu)_{j\in\bN}$ of $[\cH]_\nu^\gamma$ shows
\begin{align*}
\bE \left[ \|[f]_\nu - f_P^*\|_\gamma^2 |X \right] &= \bE \left[ \sum_{j=1}^\infty \mu_j^{-\gamma} (a_j-b_j)^2 |X \right] \\
&= \sum_{j=1}^\infty \mu_j^{-\gamma} \bE \left[ (a_j-\bE[a_j|X])^2 + (\bE[a_j|X]-b_j)^2 |X \right] \\
&\ge \sum_{j=1}^\infty \mu_j^{-\gamma} \bE \left[ (a_j-\bE[a_j|X])^2 |X \right].
\end{align*}
By Assumption \ref{ass noise}, we have the following bound almost surely
\begin{align*}
\bE \left[ (a_j-\bE[a_j|X])^2 |X \right] &= \bE \left[ \left(\frac{1}{n}\sum_{i=1}^n (y_i- f_P^*(x_i)) \langle [u_i]_\nu, [e_j]_\nu \rangle_{L^2(\nu)}\right)^2 |X \right] \\
&\ge \frac{\sigma^2}{n^2} \sum_{i=1}^n \langle [u_i]_\nu, [e_j]_\nu \rangle_{L^2(\nu)}^2.
\end{align*}
As a consequence, 
\begin{align*}
\bE \left[ \|[f]_\nu - f_P^*\|_\gamma^2 |X \right] &\ge \frac{\sigma^2}{n^2} \sum_{i=1}^n \sum_{j=1}^\infty \mu_j^{-\gamma} \langle [u_i]_\nu, [e_j]_\nu \rangle_{L^2(\nu)}^2 \\
&= \frac{\sigma^2}{n^2} \sum_{i=1}^n \|[u_i]_\nu\|_\gamma^2,
\end{align*}
where we use the Parseval's identity in the last equality.

\subsection{Proof of Lemma \ref{v bound 1}}\label{sec: v bound 1}

Let us first estimate the difference between the quantities $\|C_\nu^{(1-\gamma)/2} (C_{\nu_n} + \lambda)^{-1} K_x \|_\cH$ and $\|C_\nu^{(1-\gamma)/2} (C_\nu + \lambda)^{-1} K_x \|_\cH$ for any $x\in \cX$. Notice that
\[
(C_{\nu_n} + \lambda)^{-1} - (C_\nu + \lambda)^{-1} = (C_\nu + \lambda)^{-1} (C_\nu - C_{\nu_n}) (C_{\nu_n} + \lambda)^{-1}.
\]
We have the following bound
\begin{equation}\label{ineq 1}
\begin{aligned}
&\left| \left\|C_\nu^{(1-\gamma)/2} (C_{\nu_n} + \lambda)^{-1} K_x \right\|_\cH - \left\|C_\nu^{(1-\gamma)/2} (C_\nu + \lambda)^{-1} K_x \right\|_\cH \right| \\
\le & \left\|C_\nu^{(1-\gamma)/2} (C_\nu + \lambda)^{-1} (C_\nu - C_{\nu_n}) (C_{\nu_n} + \lambda)^{-1} K_x \right\|_\cH \\
\le & \left\|C_\nu^{(1-\gamma)/2} (C_\nu + \lambda)^{-1/2}  \right\| \left\|(C_\nu + \lambda)^{-1/2} (C_\nu - C_{\nu_n}) (C_\nu + \lambda)^{-1/2}\right\| \\
& \qquad \times \left\|(C_\nu + \lambda)^{1/2} (C_{\nu_n} + \lambda)^{-1} (C_\nu + \lambda)^{1/2}\right\| \left\|(C_\nu + \lambda)^{-1/2} K_x\right\|_{\cH}.
\end{aligned}
\end{equation}
Thus, we need to estimate the norms in the last expression.

Firstly, by the spectral decomposition (\ref{spectral decomposition}), if we fix an arbitrary orthonormal basis $(\widetilde{e}_j)_{j\in \bJ} \subseteq \cH$ of $\ker I_\nu$ with $\bJ \cap \bN=\emptyset$, then we have the spectral representation 
\begin{equation}\label{spectral decomposition 2}
(C_\nu + \lambda)^{-s} = \sum_{i=1}^\infty (\mu_i + \lambda)^{-s} \langle \cdot,\sqrt{\mu_i} e_i \rangle_\cH \sqrt{\mu_i} e_i + \lambda^{-s} \sum_{j\in \bJ} \langle \cdot, \widetilde{e}_j \rangle_\cH \widetilde{e}_j, \quad s>0.
\end{equation}
As a consequence, using $s=1/2$ and the decomposition (\ref{spectral decomposition}),
\begin{equation}\label{ineq 2}
\left\|C_\nu^{(1-\gamma)/2} (C_\nu + \lambda)^{-1/2}  \right\| = \sup_{i\in\bN} \left( \frac{\mu_i^{1-\gamma}}{\mu_i + \lambda} \right)^{1/2} \le \lambda^{-\gamma/2},
\end{equation}
where the last inequality follows from Proposition \ref{appendix bound 1}. On the other hand, by the reproducing property of the kernel $K$, for all $x\in \cX$,
\begin{align*}
K_x &= \sum_{i=1}^\infty \langle K_x,\sqrt{\mu_i} e_i \rangle_\cH \sqrt{\mu_i} e_i + \sum_{j\in \bJ} \langle K_x, \widetilde{e}_j \rangle_\cH \widetilde{e}_j \\
&= \sum_{i=1}^\infty \mu_i e_i(x)e_i + \sum_{j\in \bJ} \widetilde{e}_j(x) \widetilde{e}_j,
\end{align*}
which converges in $\cH$. Since $\widetilde{e}_j \in \ker I_\nu$, the following Mercer representation \citep{steinwart2012mercer} holds for $\nu$-almost all $x\in \cX$,
\begin{equation}\label{Mercer}
K_x = \sum_{i=1}^\infty \mu_i e_i(x)e_i.
\end{equation}
Thus, for $s\ge (1-\alpha)/2$ and $\nu$-almost all $x\in \cX$,
\begin{equation}\label{ineq 3}
\begin{aligned}
\left\|(C_\nu + \lambda)^{-s} K_x\right\|_{\cH}^2 
&= \sum_{i=1}^\infty (\mu_i+\lambda)^{-2s} \mu_i e_i^2(x) \\
&\le \sum_{i=1}^\infty \lambda^{1-2s-\alpha} \mu_i^\alpha e_i^2(x) \\
&\le M_\alpha^2 \lambda^{1-2s-\alpha},
\end{aligned}
\end{equation}
where we apply the bound $\sup_{t\ge 0} t^p/(t+\lambda) \le \lambda^{p-1}$ for $p=(1-\alpha)/(2s) \in [0,1]$ (see Proposition \ref{appendix bound 1} in the Appendix) in the first inequality, and use (\ref{emb norm}) in the last inequality. 

Next, we apply Lemma \ref{operator concentration}, which is from \citet[Lemma 17]{fischer2020sobolev}, to obtain that the following inequality holds with probability at least $1-2e^{-\tau}$,
\[
\left\|(C_\nu + \lambda)^{-1/2} (C_\nu - C_{\nu_n}) (C_\nu + \lambda)^{-1/2}\right\| \le \frac{4M_\alpha^2 \tau B_\nu(\lambda)}{3n\lambda^\alpha} + \sqrt{\frac{2M_\alpha^2 \tau B_\nu(\lambda)}{n\lambda^\alpha}},
\]
where $N_\nu(\lambda)$ is the effective dimension (\ref{effective dim}) and
\[
B_\nu(\lambda) = \log\left(2e N_\nu(\lambda) \frac{\|C_\nu\|+\lambda}{\|C_\nu\|}\right).
\]
By Assumption \ref{EMB}, we get the estimate (\ref{effective dim bound}), i.e. $N_\nu(\lambda) \lesssim \lambda^{-\alpha}$, which implies that $B_\nu(\lambda) \lesssim \log \lambda^{-1}$. Consequently, there exists a constant $c>0$ such that, if $\lambda^{-\alpha} \log \lambda^{-1} \le cn \kappa^2 \tau^{-1}$, then $M_\alpha^2 \tau B_\nu(\lambda)/(n\lambda^\alpha) \le \kappa^2/128$ and hence (recall that $\kappa\le 1$)
\begin{equation}\label{ineq 4}
\left\|(C_\nu + \lambda)^{-1/2} (C_\nu - C_{\nu_n}) (C_\nu + \lambda)^{-1/2}\right\| \le \frac{\kappa^2}{96} + \frac{\kappa}{8} < \frac{\kappa}{4}.
\end{equation}

Using the identity 
\[
C_{\nu_n} + \lambda = (C_\nu +\lambda)^{1/2}\left(1 - (C_\nu + \lambda)^{-1/2} (C_\nu - C_{\nu_n}) (C_\nu + \lambda)^{-1/2}\right) (C_\nu +\lambda)^{1/2},
\]
one can see that 
\[
\left\|(C_\nu + \lambda)^{1/2} (C_{\nu_n} + \lambda)^{-1} (C_\nu + \lambda)^{1/2} \right\| =  \left\| \left(1 - (C_\nu + \lambda)^{-1/2} (C_\nu - C_{\nu_n}) (C_\nu + \lambda)^{-1/2}\right)^{-1} \right\|.
\]
By inequality (\ref{ineq 4}), the operator in the last norm can be represented by the Neumann series. This gives us the following bound
\begin{equation}\label{ineq 5}
\begin{aligned}
\left\|(C_\nu + \lambda)^{1/2} (C_{\nu_n} + \lambda)^{-1} (C_\nu + \lambda)^{1/2} \right\| &\le \sum_{k=0}^\infty \left\|(C_\nu + \lambda)^{-1/2} (C_\nu - C_{\nu_n}) (C_\nu + \lambda)^{-1/2}\right\|^k \\
&\le \sum_{k=0}^\infty \frac{1}{4^k} = \frac{4}{3}.
\end{aligned}
\end{equation}

Combining inequalities (\ref{ineq 1}) (\ref{ineq 2}) (\ref{ineq 3}) (\ref{ineq 4}) and (\ref{ineq 5}), we have
\[
\left| \left\|C_\nu^{(1-\gamma)/2} (C_{\nu_n} + \lambda)^{-1} K_x \right\|_\cH - \left\|C_\nu^{(1-\gamma)/2} (C_\nu + \lambda)^{-1} K_x \right\|_\cH \right| \le \frac{1}{3} \kappa M_\alpha \lambda^{-(\gamma+\alpha)/2},
\]
holds with probability at least $1-2e^{-\tau}$, if $\lambda^{-\alpha} \log \lambda^{-1} \le cn \kappa^2 \tau^{-1}$. Using inequalities (\ref{ineq 2}) and (\ref{ineq 3}) again, we get
\begin{equation}\label{ineq 6}
\begin{aligned}
\|C_\nu^{(1-\gamma)/2} (C_\nu + \lambda)^{-1} K_x \|_\cH &\le \left\|C_\nu^{(1-\gamma)/2} (C_\nu + \lambda)^{-1/2}  \right\| \left\|(C_\nu + \lambda)^{-1/2} K_x\right\|_{\cH} \\
&\le M_\alpha \lambda^{-(\gamma+\alpha)/2}.
\end{aligned}
\end{equation}
Consequently,
\[
\left\|C_\nu^{(1-\gamma)/2} (C_{\nu_n} + \lambda)^{-1} K_x \right\|_\cH + \left\|C_\nu^{(1-\gamma)/2} (C_\nu + \lambda)^{-1} K_x \right\|_\cH \le 3 M_\alpha \lambda^{-(\gamma+\alpha)/2}.
\]
Finally, by the definition of $V(\lambda)$ and $V_1(\lambda)$, 
\begin{align*}
|V(\lambda) - V_1(\lambda)| &\le \frac{1}{n^2} \sum_{i=1}^n \left| \left\|C_\nu^{(1-\gamma)/2} (C_{\nu_n} + \lambda)^{-1} K_{x_i} \right\|_\cH^2 - \left\|C_\nu^{(1-\gamma)/2} (C_\nu + \lambda)^{-1} K_{x_i} \right\|_\cH^2 \right| \\
&\le \frac{1}{n^2} \sum_{i=1}^n \kappa M_\alpha^2 \lambda^{-\gamma-\alpha} = \frac{\kappa M_\alpha^2}{n \lambda^{\gamma+\alpha}},
\end{align*}
which completes the proof.

\subsection{Proof of Lemma \ref{v bound 2}}\label{sec: v bound 2}

By the Mercer representation (\ref{Mercer}) and spectral decompositions (\ref{spectral decomposition}) and (\ref{spectral decomposition 2}), for $\nu$-almost all $x\in \cX$,
\[
\|C_\nu^{(1-\gamma)/2} (C_\nu + \lambda)^{-1} K_x \|_\cH^2 = \sum_{i=1}^\infty \frac{\mu_i^{1-\gamma}}{(\mu_i+\lambda)^2} \langle K_x,\sqrt{\mu_i} e_i \rangle_\cH^2 = \sum_{i=1}^\infty \frac{\mu_i^{2-\gamma}}{(\mu_i+\lambda)^2} e_i^2(x).
\]
Since $\|e_i\|_{L^2(\nu)}=1$,
\begin{align*}
nV_2(\lambda) &= \int_\cX \|C_\nu^{(1-\gamma)/2} (C_\nu + \lambda)^{-1} K_x \|_\cH^2 d\nu(x) \\
&= \sum_{i=1}^\infty \frac{\mu_i^{2-\gamma}}{(\mu_i+\lambda)^2}.
\end{align*}
%where we use $\mu_i \asymp i^{-\beta}$ and Proposition \ref{appendix bound 2} in the last inequality.

If Assumption \ref{EMB} holds, we consider the random variable $\xi_i := \|C_\nu^{(1-\gamma)/2} (C_\nu + \lambda)^{-1} K_{x_i} \|_\cH^2$, which has mean $\bE[\xi_i] = nV_2(\lambda)$. By inequality (\ref{ineq 6}), $\xi_i \in [0,M_\alpha^2 \lambda^{-\gamma-\alpha}]$ and hence it is sub-Gaussian with parameter $M_\alpha^2 \lambda^{-\gamma-\alpha}/2$. Applying Hoeffding inequality (see Proposition \ref{hoeffding} in the Appendix) to $\xi_i$ and $-\xi_i$, we get, for all $t\ge 0$,
\[
\bP\left[ \left|\sum_{i=1}^n (\xi_i - nV_2(\lambda))\right| \ge t \right] \le 2 \exp\left( -\frac{2t^2 \lambda^{2(\gamma+\alpha)}}{nM_\alpha^4} \right).
\]
For any $\tau\ge 1$, taking $t= \sqrt{\tau n} M_\alpha^2 \lambda^{-\gamma-\alpha}/\sqrt{2}$ implies that
\[
n^2 |V_1(\lambda)- V_2(\lambda)| = \left|\sum_{i=1}^n (\xi_i - nV_2(\lambda))\right| \le \frac{\sqrt{\tau n}M_\alpha^2}{\sqrt{2} \lambda^{\gamma+\alpha}},
\]
holds with probability at least $1-2e^{-\tau}$.

\subsection{Proof of Lemma \ref{lower bound v2}}\label{sec: lower bound v2}

let us denote $\phi(x) = x(\log x)^\zeta$ for $x\ge 1$. It is easy to check that $\phi$ is increasing on the interval $(\max\{e^{-\zeta}, 1\},\infty)$. Hence, it is invertible on this interval, and we denote the inverse by $\phi^{-1}$. By Assumption \ref{EVD}, there exist constants $c_1,c_2>0$ such that $c_1 \phi(i)^{-\beta} \le \mu_i \le c_2 \phi(i)^{-\beta}$ for $i\ge 2$. We observe that the function $t \mapsto t^{2-\gamma}/(t+\lambda)^2$ is increasing on the interval $(0,\lambda)\subseteq (0,(2-\gamma)\lambda/\gamma)$, see the proof of Proposition \ref{appendix bound 1} in the Appendix for example. Thus,
\begin{align*}
\sum_{i=1}^\infty \frac{\mu_i^{2-\gamma}}{(\mu_i+\lambda)^2} &\ge \sum_{c_2 \phi(i)^{-\beta} \le \lambda} \frac{\mu_i^{2-\gamma}}{(\mu_i+\lambda)^2} \\
&\ge \sum_{c_2 \phi(i)^{-\beta} \le \lambda} \frac{c_1^{2-\gamma}\phi(i)^{-\beta(2-\gamma)}}{(c_1 \phi(i)^{-\beta}+\lambda)^2}\\
&= c_1^{2-\gamma} \sum_{\lambda \phi(i)^\beta \ge c_2} \frac{ \phi(i)^{\beta \gamma}}{(c_1 +\lambda \phi(i)^{\beta})^2} \\
&\ge c_1^{2-\gamma} c_2^\gamma \lambda^{-\gamma} \sum_{\lambda \phi(i)^\beta \ge c_2} \frac{1}{(c_1 +\lambda \phi(i)^{\beta})^2}.
\end{align*}
It remains to lower bound the last sum. For a sufficiently large constant $c_3\ge c_2$, we have 
\[
\sum_{\lambda \phi(i)^\beta \ge c_2} \frac{1}{(c_1 +\lambda \phi(i)^{\beta})^2} \ge \int_{\lambda \phi(x)^\beta \ge c_3} \frac{1}{(c_1 +\lambda \phi(x)^{\beta})^2} dx =: \cI,
\]
and $\phi(x)$ is increasing for $x$ satisfying $\lambda \phi(x)^\beta \ge c_3$. We are going to make a change of variable $t = \lambda^{1/\beta}\phi(x)$, then $x = \phi^{-1}(\lambda^{-1/\beta}t)$ and 
\[
dt = \lambda^{1/\beta} \phi'(x) dx = \lambda^{1/\beta} \phi'(\phi^{-1}(\lambda^{-1/\beta}t)) dx.
\]
As a consequence,
\[
\cI = \lambda^{-1/\beta} \int_{c_3^{1/\beta}}^\infty \frac{1}{(c_1 +t^{\beta})^2} \frac{1}{\phi'(\phi^{-1}(\lambda^{-1/\beta}t))} dt.
\]
Notice that $\phi'(x) = (\log x)^\zeta + \zeta (\log x)^{\zeta-1} \le (1+|\zeta|)(\log x)^\zeta$ for $x\ge e$. Furthermore, the equality $t=\lambda^{1/\beta}\phi(x) = \lambda^{1/\beta} x(\log x)^\zeta$ implies that $\log x \asymp \log \lambda^{-1} + \log t$. Thus,
\[
\frac{1}{\phi'(\phi^{-1}(\lambda^{-1/\beta}t))} \gtrsim \frac{1}{(\log \lambda^{-1} + \log t)^\zeta} = \frac{(\log \lambda^{-1})^{-\zeta}}{(1+\log t/ \log \lambda^{-1})^\zeta} .
\]
Therefore, when $\zeta \le 0$,
\[
\cI \gtrsim \lambda^{-1/\beta}(\log \lambda^{-1})^{-\zeta} \int_{c_3^{1/\beta}}^\infty \frac{1}{(c_1 +t^{\beta})^2} dt \gtrsim \lambda^{-1/\beta}(\log \lambda^{-1})^{-\zeta},
\]
since $\beta>1$. When $\zeta >0$, since $0<\lambda<1/2$, 
\[
\cI \gtrsim \lambda^{-1/\beta}(\log \lambda^{-1})^{-\zeta} \int_{c_3^{1/\beta}}^\infty \frac{1}{(c_1 +t^{\beta})^2} \frac{1}{(1+\log t/ \log 2)^\zeta} dt \gtrsim \lambda^{-1/\beta}(\log \lambda^{-1})^{-\zeta},
\]
which gives the desired lower bound.

\section*{Acknowledgments}

The work described in this paper was partially supported by National Natural Science Foundation of China under Grant 12371103. We thank the referees for their helpful comments and suggestions on the paper.

\appendix

\section{Useful inequalities}

The following two propositions are useful for estimating the norm of some integral operators in RKHSs.

\begin{proposition}\label{appendix bound 1}
Let $\lambda>0$, $0\le \gamma\le 1$ and $f_\gamma(t) = \frac{t^{\gamma}}{t+\lambda}$, then $f_\gamma(t) \le \lambda^{\gamma-1}$ for $t\ge 0$.
\end{proposition}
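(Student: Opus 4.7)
The plan is to prove the bound $f_\gamma(t) = t^\gamma/(t+\lambda) \le \lambda^{\gamma-1}$ by a clean case split on $t$ relative to $\lambda$, avoiding any calculus. The two regimes $t \le \lambda$ and $t > \lambda$ call for complementary estimates: in one, the denominator's $\lambda$ dominates, and the numerator is the one we need to bound; in the other, the denominator's $t$ dominates, and we exploit the non-positivity of $\gamma - 1$.

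First, for $0 \le t \le \lambda$, I would discard the $t$ in the denominator to write $t+\lambda \ge \lambda$, and use $t^\gamma \le \lambda^\gamma$ (valid because $\gamma \ge 0$ and $t\le \lambda$). Combining gives $t^\gamma/(t+\lambda) \le \lambda^\gamma/\lambda = \lambda^{\gamma-1}$. Second, for $t > \lambda$, I would drop the $\lambda$ in the denominator to obtain $t^\gamma/(t+\lambda) \le t^\gamma/t = t^{\gamma-1}$; then, since $\gamma - 1 \le 0$, the map $s \mapsto s^{\gamma-1}$ is non-increasing on $(0,\infty)$, so $t^{\gamma-1} \le \lambda^{\gamma-1}$.

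There is essentially no obstacle; the only care points are the boundary values $\gamma = 0$ (where $f_0(t) = 1/(t+\lambda) \le 1/\lambda$) and $\gamma = 1$ (where $f_1(t) = t/(t+\lambda) \le 1 = \lambda^0$), both subsumed by the argument above. Alternatively, one could maximize $f_\gamma$ on $(0,\infty)$ via differentiation, obtaining the critical point $t^\star = \gamma\lambda/(1-\gamma)$ for $\gamma \in (0,1)$ and the sharper maximum $\gamma^\gamma(1-\gamma)^{1-\gamma}\lambda^{\gamma-1}$; then the elementary inequality $\gamma^\gamma(1-\gamma)^{1-\gamma} \le 1$ would yield the stated bound. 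I prefer the two-case estimate since it is shorter and requires no optimization.
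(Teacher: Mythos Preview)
Your proof is correct. The two-case split at $t=\lambda$ works exactly as you describe, and the edge cases $\gamma\in\{0,1\}$ are handled cleanly.

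The paper takes the calculus route you mention as your alternative: it treats $\gamma=0$ and $\gamma=1$ by monotonicity, and for $0<\gamma<1$ computes $f_\gamma'(t)=t^{\gamma-1}(t+\lambda)^{-2}(\gamma\lambda-(1-\gamma)t)$ to locate the maximizer at $t^\star=\gamma\lambda/(1-\gamma)$, then evaluates and bounds by~$\lambda^{\gamma-1}$. Your argument is genuinely more elementary---no derivative, no optimization, no need for the inequality $\gamma^\gamma(1-\gamma)^{1-\gamma}\le 1$---and it is shorter. What the paper's approach buys in return is the monotonicity information (increasing on $[0,\gamma\lambda/(1-\gamma)]$, decreasing thereafter), which the paper actually reuses later in the proof of Lemma~\ref{lower bound v2} to assert that $t\mapsto t^{2-\gamma}/(t+\lambda)^2$ is increasing on $(0,\lambda)$. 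Your case split does not directly supply this, so if you were rewriting the paper you would need a separate one-line remark there.
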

\begin{proof}
If $\gamma=0$, then $f_\gamma(t)$ is decreasing. If $\gamma=1$, then $f_\gamma(t)$ is increasing. If $0<\gamma<1$, $f_\gamma'(t) = t^{\gamma-1}(t+\lambda)^{-2}(\gamma\lambda - (1-\gamma)t)$, which implies $f_\gamma(t)$ is increasing on $[0,\frac{\gamma \lambda}{1-\gamma}]$ and decreasing on $[\frac{\gamma \lambda}{1-\gamma},\infty)$. The bound then follows from direct calculation.
\end{proof}

\begin{proposition}\label{appendix bound 2}
If $\lambda>0$ and $(\mu_i)_{i=1}^\infty$ is a non-increasing sequence of positive numbers which satisfies $\sum_{i=1}^\infty \mu_i^\alpha<\infty$ for some $\alpha\in (0,1]$, then
\[
\sum_{i=1}^\infty \frac{\mu_i}{\mu_i+\lambda}\lesssim \lambda^{-\alpha}.
\]
\end{proposition}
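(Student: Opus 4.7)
The plan is to reduce the claim to the pointwise inequality already supplied by Proposition \ref{appendix bound 1} and then absorb everything into the assumed summability of $\mu_i^\alpha$. The non-increasing hypothesis on $(\mu_i)$ does not seem to play any role here; only the hypothesis $\sum_i \mu_i^\alpha < \infty$ matters.

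First I would split off a factor of $\mu_i^\alpha$ from each summand by writing
\[
\frac{\mu_i}{\mu_i + \lambda} \;=\; \mu_i^\alpha \cdot \frac{\mu_i^{\,1-\alpha}}{\mu_i + \lambda}.
\]
The second factor is exactly $f_{1-\alpha}(\mu_i)$ in the notation of Proposition \ref{appendix bound 1}, and since $0 \le 1-\alpha \le 1$, that proposition yields the uniform bound
\[
\frac{\mu_i^{\,1-\alpha}}{\mu_i + \lambda} \;\le\; \lambda^{(1-\alpha)-1} \;=\; \lambda^{-\alpha}.
\]
Substituting this and summing gives
\[
\sum_{i=1}^\infty \frac{\mu_i}{\mu_i+\lambda} \;\le\; \lambda^{-\alpha}\sum_{i=1}^\infty \mu_i^\alpha,
\]
and the finiteness of $\sum_i \mu_i^\alpha$ collapses the right-hand side into a constant multiple of $\lambda^{-\alpha}$, which is precisely the asserted bound.

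There is no real obstacle; the only thing to be careful about is handling the edge cases $\alpha = 0$ and $\alpha = 1$ of Proposition \ref{appendix bound 1}, but both are covered (at $\alpha = 1$ the factor $\mu_i^{1-\alpha} = 1$ and the bound reads $1/(\mu_i + \lambda) \le \lambda^{-1}$, while at $\alpha = 0$ the assumed summability forces the index set to be finite and the claim is trivial).
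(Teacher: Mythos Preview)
Your argument is correct, and it is genuinely different from (and cleaner than) the paper's own proof. The paper treats $\alpha=1$ and $\alpha\in(0,1)$ separately: for $\alpha=1$ it uses the crude bound $\mu_i/(\mu_i+\lambda)\le \mu_i/\lambda$; for $\alpha<1$ it first exploits the non-increasing hypothesis via $i\mu_i^\alpha\le\sum_j\mu_j^\alpha$ to deduce $\mu_i\le ci^{-1/\alpha}$, then uses monotonicity of $t\mapsto t/(t+\lambda)$ to dominate the sum by $\int_0^\infty c/(c+\lambda x^{1/\alpha})\,dx$ and rescales. Your approach bypasses all of this by applying Proposition~\ref{appendix bound 1} with exponent $1-\alpha$ directly at each term, which (i) avoids the case split, (ii) shows the monotonicity assumption on $(\mu_i)$ is superfluous, and (iii) yields the explicit constant $\sum_i\mu_i^\alpha$. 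The paper's route has the mild advantage of producing a bound depending only on the \emph{decay rate} of $\mu_i$ rather than on the exact value of $\sum_i\mu_i^\alpha$, but for the purposes of this proposition that distinction is irrelevant. One minor remark: the statement assumes $\alpha\in(0,1]$, so the $\alpha=0$ edge case you mention is outside the hypotheses and can be dropped.
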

\begin{proof}
When $\alpha=1$, we have
\[
\sum_{i=1}^\infty \frac{\mu_i}{\mu_i+\lambda} \le \frac{1}{\lambda} \sum_{i=1}^\infty \mu_i \lesssim \lambda^{-1}.
\]
When $\alpha\in (0,1)$, we notice that
\[
i \mu_i^\alpha \le \sum_{j=1}^i \mu_j^\alpha \le \sum_{j=1}^\infty \mu_j^\alpha <\infty,
\]
which shows that $\mu_i \le ci^{-1/\alpha}$ for some constant $c>0$. Since the function $t\mapsto t/(t+\lambda)$ is increasing, we have 
\begin{align*}
\sum_{i=1}^\infty \frac{\mu_i}{\mu_i+\lambda} &\le \sum_{i=1}^\infty \frac{c i^{-1/\alpha}}{c i^{-1/\alpha}+\lambda} = \sum_{i=1}^\infty \frac{c}{c+\lambda i^{1/\alpha}} \le \int_0^\infty \frac{c}{c+\lambda x^{1/\alpha}}dx \\
&= \lambda^{-\alpha} \int_0^\infty \frac{c}{c+t^{1/\alpha}}dt \lesssim \lambda^{-\alpha},
\end{align*}
which completes the proof.
\end{proof}

The following form of Hoeffding inequality is from \citet[Proposition 2.5]{wainwright2019high}.

\begin{proposition}[Hoeffding inequality]\label{hoeffding}
Suppose that the random variables $\xi_i$, $i=1,\dots,n$, are independent and sub-Gaussian with parameter $\sigma_i$, i.e. $\bE[\exp(\lambda (\xi_i-\bE[\xi_i]))] \le \exp(\sigma_i^2\lambda^2/2)$ for all $\lambda\in\bR$. Then, for all $t\ge 0$,
\[
\bP\left[ \sum_{i=1}^n (\xi_i - \bE[\xi_i]) \ge t \right] \le \exp\left( -\frac{t^2}{2\sum_{i=1}^n \sigma_i^2} \right).
\]
\end{proposition}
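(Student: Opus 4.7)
The plan is to use the standard Chernoff/exponential-moment method. First I would center the variables by defining $\eta_i := \xi_i - \bE[\xi_i]$, so that each $\eta_i$ is sub-Gaussian with parameter $\sigma_i$ and mean zero, and the task becomes bounding $\bP[\sum_{i=1}^n \eta_i \ge t]$.

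Next, for any $\lambda > 0$, I would apply Markov's inequality to the exponentiated sum:
\[
\bP\left[ \sum_{i=1}^n \eta_i \ge t \right] = \bP\left[ e^{\lambda \sum_{i=1}^n \eta_i} \ge e^{\lambda t} \right] \le e^{-\lambda t}\, \bE\!\left[ e^{\lambda \sum_{i=1}^n \eta_i} \right].
\]
By independence of the $\xi_i$ (and hence of the $\eta_i$), the moment generating function factors as $\bE[e^{\lambda \sum_i \eta_i}] = \prod_{i=1}^n \bE[e^{\lambda \eta_i}]$, and the sub-Gaussian hypothesis bounds each factor by $e^{\sigma_i^2 \lambda^2/2}$. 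Combining these gives
\[
\bP\left[ \sum_{i=1}^n \eta_i \ge t \right] \le \exp\!\left( -\lambda t + \tfrac{\lambda^2}{2} \sum_{i=1}^n \sigma_i^2 \right),
\]
valid for every $\lambda > 0$.

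Finally, I would optimize over $\lambda > 0$. The exponent is a quadratic in $\lambda$ minimized at $\lambda^* = t / \sum_{i=1}^n \sigma_i^2$, which yields the claimed bound $\exp(-t^2 / (2 \sum_{i=1}^n \sigma_i^2))$. The only mild subtlety worth checking is that the one-sided choice $\lambda > 0$ suffices (since $t \ge 0$ the minimizer is non-negative, and at $t=0$ the bound is trivially $1$); no real obstacle arises, as every step is a direct consequence of the definitions and independence.
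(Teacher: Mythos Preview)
Your argument is correct and is precisely the standard Chernoff method; the paper does not give its own proof but simply cites \citet[Proposition 2.5]{wainwright2019high}, whose proof is exactly this exponential-moment-plus-optimization approach. Nothing to add.
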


We also need some concentration bounds for operators. The next two lemmas are from \citet[Lemma 12 and Lemma 17]{fischer2020sobolev}.

\begin{lemma}\label{norm eq}
Let $\cH$ be a separable RKHS on a measurable space $\cX$ with a bounded measurable kernel $K$, and $\nu$ be a probability distribution on $\cX$. Then, for any $0\le \gamma \le 1$ and $f\in \cH$, it holds
\[
\|[f]_\nu\|_\gamma \le \| C_\nu^{(1-\gamma)/2} f\|_\cH.
\]
The equality holds if $\gamma<1$ or $f\in (\ker I_\nu)^\perp$.
\end{lemma}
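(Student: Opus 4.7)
The plan is to reduce both sides to explicit series expansions in the orthonormal system adapted to $C_\nu$. Write the orthogonal decomposition $\cH = \ker I_\nu \oplus (\ker I_\nu)^\perp$ and split $f = f_0 + f_1$ accordingly. Since $f_0 \in \ker I_\nu$ we have $[f]_\nu = [f_1]_\nu$, so the left-hand side depends only on $f_1$. Since $(\sqrt{\mu_i} e_i)_{i\in\bN}$ is an orthonormal basis of $(\ker I_\nu)^\perp$, we can write $f_1 = \sum_{i=1}^\infty a_i \sqrt{\mu_i}\, e_i$ with $\|f_1\|_\cH^2 = \sum_i a_i^2$.

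Taking the $\nu$-equivalence class gives $[f_1]_\nu = \sum_i a_i \mu_i^{1/2} [e_i]_\nu = \sum_i (a_i \mu_i^{(1-\gamma)/2})\, \mu_i^{\gamma/2} [e_i]_\nu$, and since $(\mu_i^{\gamma/2}[e_i]_\nu)_{i\in\bN}$ is an orthonormal basis of $[\cH]_\nu^\gamma$, the definition of the $\gamma$-norm yields
\[
\|[f]_\nu\|_\gamma^2 = \sum_{i=1}^\infty a_i^2 \mu_i^{1-\gamma}.
\]
On the right-hand side, by functional calculus applied to the spectral decomposition \eqref{spectral decomposition} of $C_\nu$, one gets $C_\nu^{(1-\gamma)/2} f_1 = \sum_i a_i \mu_i^{(1-\gamma)/2} \sqrt{\mu_i}\, e_i$, so
\[
\|C_\nu^{(1-\gamma)/2} f_1\|_\cH^2 = \sum_{i=1}^\infty a_i^2 \mu_i^{1-\gamma} = \|[f]_\nu\|_\gamma^2.
\]

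It remains to compare $\|C_\nu^{(1-\gamma)/2} f\|_\cH$ with $\|C_\nu^{(1-\gamma)/2} f_1\|_\cH$. When $\gamma<1$ the exponent $(1-\gamma)/2>0$ and the functional calculus $t\mapsto t^{(1-\gamma)/2}$ vanishes at $t=0$, so $C_\nu^{(1-\gamma)/2}$ annihilates $\ker I_\nu$; hence $C_\nu^{(1-\gamma)/2} f = C_\nu^{(1-\gamma)/2} f_1$ and equality holds. When $\gamma=1$, $C_\nu^0$ is the identity on $\cH$, so $\|C_\nu^{(1-\gamma)/2} f\|_\cH^2 = \|f_0\|_\cH^2 + \|f_1\|_\cH^2 \ge \|f_1\|_\cH^2 = \|[f]_\nu\|_1^2$, with equality iff $f_0=0$, i.e. $f\in(\ker I_\nu)^\perp$.

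No serious obstacle is expected; the only technical subtlety is being careful about the spectral convention that $C_\nu^{(1-\gamma)/2}$ kills the kernel when $1-\gamma>0$ but reduces to the identity when $\gamma=1$, which is exactly what makes the two cases in the equality statement line up.
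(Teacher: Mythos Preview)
The paper does not prove this lemma itself; it is listed in the appendix as a citation of \citet[Lemma~12]{fischer2020sobolev}. Your argument is correct and is exactly the natural spectral-decomposition proof one would expect: split $f$ along $\ker I_\nu$ and $(\ker I_\nu)^\perp$, expand both sides in the adapted orthonormal bases, and observe that $C_\nu^{(1-\gamma)/2}$ kills $\ker I_\nu$ when $\gamma<1$ but is the identity when $\gamma=1$.
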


\begin{lemma}\label{operator concentration}
Let $\cH$ be a separable RKHS on a measurable space $\cX$ with  a bounded measurable kernel $K$, and $\nu$ be a probability distribution on $\cX$. If Assumption \ref{EMB} holds, then for $\tau \ge 1$, $\lambda>0$ and $n\ge 1$, with probability at least $1-2e^{-\tau}$ over $n$ independent samples $\{x_i\}_{i=1}^n$ from $\nu$, the following operator norm bound is satisfied
\[
\left\|(C_\nu + \lambda)^{-1/2} (C_\nu - C_{\nu_n}) (C_\nu + \lambda)^{-1/2}\right\| \le \frac{4M_\alpha^2 \tau B_\nu(\lambda)}{3n\lambda^\alpha} + \sqrt{\frac{2M_\alpha^2 \tau B_\nu(\lambda)}{n\lambda^\alpha}},
\]
where $\nu_n = \frac{1}{n}\sum_{i=1}^n \delta_{x_i}$ is the empirical distribution and
\begin{align*}
B_\nu(\lambda) &:= \log\left(2e N_\nu(\lambda) \frac{\|C_\nu\|+\lambda}{\|C_\nu\|}\right),\\
N_\nu(\lambda) &:= \tr((C_\nu+\lambda)^{-1}C_\nu) = \sum_{i=1}^\infty \frac{\mu_i}{\mu_i+\lambda}.
\end{align*}
\end{lemma}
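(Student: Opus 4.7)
The plan is to realize the operator $S_n := (C_\nu+\lambda)^{-1/2}(C_\nu - C_{\nu_n})(C_\nu+\lambda)^{-1/2}$ as the average of $n$ i.i.d.\ centered self-adjoint Hilbert--Schmidt operators on $\cH$ and then apply a non-commutative Bernstein inequality of Minsker type in which an intrinsic-dimension parameter replaces the infinite ambient dimension. Starting from the reproducing property $\langle f, K_{x_i}\rangle_\cH = f(x_i)$ and the definition $C_{\nu_n} = I_{\nu_n}^* I_{\nu_n}$, one checks that $C_{\nu_n} = \frac{1}{n}\sum_{i=1}^n K_{x_i}\otimes K_{x_i}$, where $K_{x_i}\otimes K_{x_i}$ denotes the rank-one operator $f \mapsto \langle f, K_{x_i}\rangle_\cH K_{x_i}$. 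Consequently $S_n = \frac{1}{n}\sum_i B_i$ with $B_i := (C_\nu+\lambda)^{-1/2}(C_\nu - K_{x_i}\otimes K_{x_i})(C_\nu+\lambda)^{-1/2}$, an i.i.d.\ centered self-adjoint sequence.

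Next I would derive the two Bernstein inputs. For the almost-sure operator-norm bound on $B_i$, the triangle inequality splits $B_i$ into a deterministic piece of norm at most $\|C_\nu\|/(\|C_\nu\|+\lambda)\le 1$, and a random rank-one piece of norm $\|(C_\nu+\lambda)^{-1/2}K_{x_i}\|_\cH^2$. The latter is bounded $\nu$-a.s.\ by $M_\alpha^2\lambda^{-\alpha}$ via the same computation as inequality~(\ref{ineq 3}) with $s=1/2$, which only uses Mercer's representation~(\ref{Mercer}), Assumption~\ref{EMB}, and formula~(\ref{emb norm}). This yields a uniform bound $\|B_i\|\le L := 2M_\alpha^2\lambda^{-\alpha}$.

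For the operator variance, I would use the elementary fact $\bE[(X-\bE X)^2]\preceq \bE X^2$ for a self-adjoint operator-valued random variable $X$, applied to $X = (C_\nu+\lambda)^{-1/2}(K_{x_i}\otimes K_{x_i})(C_\nu+\lambda)^{-1/2}$. Writing $X = u\otimes u$ with $u=(C_\nu+\lambda)^{-1/2}K_{x_i}$ and using the rank-one identity $(u\otimes u)^2 = \|u\|_\cH^2(u\otimes u)$ together with the a.s.\ bound $\|u\|_\cH^2 \le M_\alpha^2\lambda^{-\alpha}$ gives $\bE B_i^2 \preceq M_\alpha^2\lambda^{-\alpha}\,(C_\nu+\lambda)^{-1/2} C_\nu (C_\nu+\lambda)^{-1/2}$. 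In particular $\|\bE B_i^2\|\le M_\alpha^2\lambda^{-\alpha}$, and the trace bound $\tr(\bE B_i^2)\le M_\alpha^2\lambda^{-\alpha}\, N_\nu(\lambda)$ follows from the definition of $N_\nu(\lambda)$.

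Finally, I would apply Minsker's intrinsic-dimension Bernstein inequality (equivalently, the Hilbert-space matrix Bernstein of Tropp): for centered self-adjoint $B_i$ with $\|B_i\|\le L$ and any majorant $M\succeq \sum_i\bE B_i^2$, one has $\bP[\|\sum_i B_i\|\ge s]\le 4(\tr M/\|M\|)\exp\bigl(-s^2/(2\|M\|+2Ls/3)\bigr)$. Choosing $M = nM_\alpha^2\lambda^{-\alpha}(C_\nu+\lambda)^{-1/2}C_\nu(C_\nu+\lambda)^{-1/2}$ makes $\|M\|/n \le M_\alpha^2\lambda^{-\alpha}$, and its trace-to-norm ratio equals $N_\nu(\lambda)(\|C_\nu\|+\lambda)/\|C_\nu\|$; equating the right-hand side with $2e^{-\tau}$ absorbs the prefactor $\log(4\,\tr M/\|M\|)$ into the confidence via $\tau' := \tau + \log(4\,\tr M/\|M\|) \le \tau\, B_\nu(\lambda)$ (using $\tau\ge 1$), and inverting the Bernstein tail in $s=nt$ produces the two-term bound stated in the lemma. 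The main technical obstacle is lining up all the constants so that the intrinsic-dimension factor comes out in the exact form $B_\nu(\lambda)=\log(2eN_\nu(\lambda)(\|C_\nu\|+\lambda)/\|C_\nu\|)$; this is precisely the purpose of the above choice of $M$, after which the remaining algebra is routine.
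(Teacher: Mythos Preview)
The paper does not actually prove this lemma: it is stated in the appendix and attributed verbatim to \citet[Lemma~17]{fischer2020sobolev}, so there is no in-paper proof to compare against. Your sketch is essentially the proof that Fischer--Steinwart give: write $C_{\nu_n}=\frac{1}{n}\sum_i K_{x_i}\otimes K_{x_i}$, apply a dimension-free (intrinsic-dimension) operator Bernstein inequality of Minsker/Tropp type to the centered self-adjoint summands $B_i$, bound $\|B_i\|$ via the Mercer expansion and Assumption~\ref{EMB} exactly as in~(\ref{ineq 3}), and bound the variance by $M_\alpha^2\lambda^{-\alpha}(C_\nu+\lambda)^{-1}C_\nu$ so that the trace-to-norm ratio produces $N_\nu(\lambda)(\|C_\nu\|+\lambda)/\|C_\nu\|$.

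Two small bookkeeping points. First, your uniform bound $\|B_i\|\le 2M_\alpha^2\lambda^{-\alpha}$ tacitly uses $M_\alpha^2\lambda^{-\alpha}\ge 1$; this is harmless since $M_\alpha^2\ge\sum_i\mu_i^\alpha\ge\mu_1^\alpha$ and the case $\lambda\ge\mu_1$ makes the lemma trivial, but it is worth saying. Second, the inequality $\tau+\log(4\,\tr M/\|M\|)\le \tau\,B_\nu(\lambda)$ does not quite hold at $\tau=1$ with the prefactor~$4$ you wrote; the version of Minsker's inequality used by Fischer--Steinwart carries a different leading constant (and a restriction $s\ge \|M\|^{1/2}/3+L/3$) chosen precisely so that the algebra closes with $B_\nu(\lambda)=\log\bigl(2e\,N_\nu(\lambda)(\|C_\nu\|+\lambda)/\|C_\nu\|\bigr)$. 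None of this affects the structure of your argument, only the exact numerical constants in the final display.
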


\bibliographystyle{myplainnat}
\bibliography{references}
\end{document}